\documentclass[11pt]{article}

\usepackage{acl}

\usepackage{times}
\usepackage{latexsym}
\usepackage[T1]{fontenc}
\usepackage[utf8]{inputenc}
\usepackage{microtype}
\usepackage{inconsolata}

\usepackage{graphicx}
\usepackage{booktabs}
\usepackage{amsfonts}
\usepackage{amsmath}
\usepackage{amssymb}
\usepackage{xcolor}
\usepackage{multirow}
\usepackage{algorithm}
\usepackage{algorithmic}
\usepackage{enumitem}
\usepackage{amsthm}
\usepackage{placeins}

\theoremstyle{plain}
\newtheorem{theorem}{Theorem}
\newtheorem{lemma}{Lemma}
\newtheorem{corollary}{Corollary}
\newtheorem{proposition}{Proposition}
\theoremstyle{definition}
\newtheorem{definition}{Definition}
\newtheorem{assumption}{Assumption}
\theoremstyle{remark}

\newcommand{\Dbel}{D_{\mathrm{belief}}}
\newcommand{\Darr}{D_{\mathrm{arrival}}}
\newcommand{\Dgro}{D_{\mathrm{growth}}}
\newcommand{\BIWM}{\textsc{BIWM}}
\newcommand{\hibench}{\textsc{HIBench-Code}}

\title{Measuring Harness-Induced Belief Divergence in Multi-Step LLM Agents}

\author{
Haiwen Yi\thanks{Equal contribution.} \\
University of Toronto
\And
Xinyuan Song\footnotemark[1]\thanks{Corresponding author.} \\
Emory University
}

\begin{document}
\maketitle

\begin{abstract}
Software-agent benchmarks usually report whether an agent solves a task, but the agent reaches that outcome through a harness that controls what it sees, which actions it can take, which failures are repaired, which states are verified, and which evidence is logged. We show that this harness can change the agent's multi-step beliefs even when the task, environment, and base LLM are fixed. We introduce a belief-rollout diagnostic that elicits structured (K)-step trajectories over progress, risk, recoverability, constraints, failure mode, uncertainty, future success, repair cost, and next action under alternative harnesses. We define a cross-harness belief divergence and decompose it into an arrival term for immediate interface shifts and a growth term for horizon-dependent belief changes. On controlled coding tasks and public-benchmark stress tests, blocked actions, compressed repairs, selective verification, and cost-aware evidence pruning often preserve terminal success while changing the beliefs that drive later decisions. We further introduce Belief-Invariant World-Modeling (BIWM), a no-training protocol that canonicalizes observations, logs censored branches, expands repair traces, records verification masks, executes risky branches in shadow, and aligns belief trajectories across harness views. The results suggest that harness design is an experimental variable in agent evaluation, not an implementation detail. Our code is available at \url{https://github.com/Hik289/Harness-induce-bias.git}.
\end{abstract}

\section{Introduction}

Large language model agents are increasingly evaluated as software workers: they inspect repositories, invoke tools, edit files, run test suites, navigate web applications and desktop environments, and call external APIs~\citep{yang2024sweagent,jimenez2023swbench,zhou2023webarena,xie2024osworld,li2023apibank,qin2023toolllm,patil2024gorilla,patil2025bfcl,jin2024llmagensSE,wang2025openhands_sdk,yao2023react,schick2023toolformer}. Standard evaluations ask an outcome-based question: did the agent solve the task?~\citep{liu2023agentbench,ma2024agentboard} Although useful, this view treats the surrounding execution system as neutral infrastructure. In practice, every agent operates through a \emph{harness} that mediates its interaction with the task by controlling the observations it receives, the actions it may take, and the feedback produced by those actions~\citep{stein2026mcptools,sah2026verifiertax}. This mediation creates a feedback loop. The agent uses its current beliefs about task progress, risk, recoverability, and likely failure modes to reason about what to do next, while the harness determines the evidence from which those beliefs are updated~\citep{yao2023react,shinn2023reflexion}.

This paper starts from a simple observation: a harness can change the informational version of a task without changing the underlying task. The same base LLM can face the same bug, but one harness may expose the raw output of a failed command, another may block the command and return a policy violation, and a third may repair the intermediate failure before the model observes it~\citep{yang2024sweagent,xu2026lifeharness}. These interventions may leave the final benchmark label unchanged, yet they alter the evidence available to the agent and therefore change the belief trajectory that guides its later decisions~\citep{ma2024agentboard}. An agent may infer that it is making progress, facing an active risk, entering a recoverable state, or approaching a particular failure mode solely because of how the harness presents and transforms execution feedback. The harness is therefore not merely a channel through which reasoning is executed; it is part of the mechanism that shapes the beliefs used to control subsequent execution~\citep{shinn2023reflexion}. Figure~\ref{fig:intuition} illustrates this mechanism in a controlled example.

\begin{figure*}[t]

  \centering

  \includegraphics[width=0.88\textwidth]{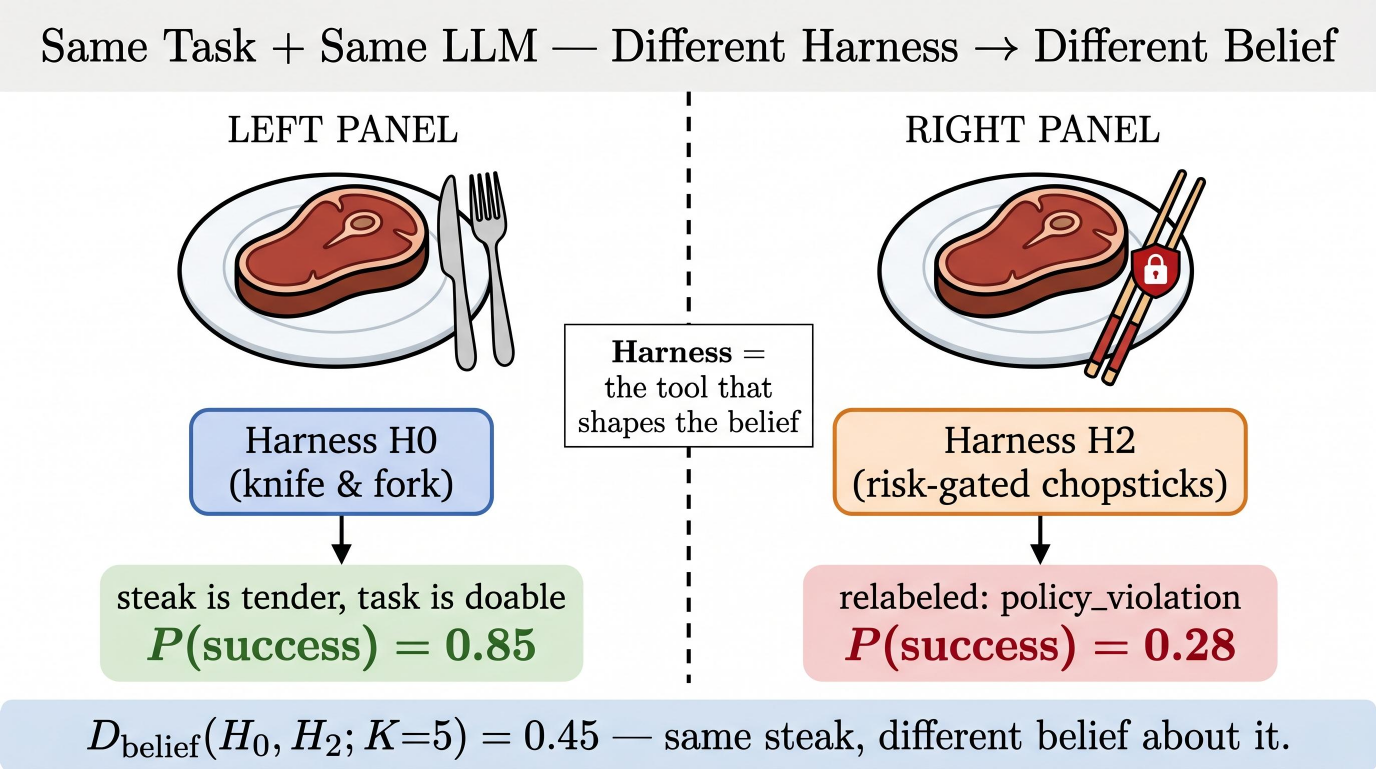}

\caption{\textbf{The same task and the same LLM can induce different beliefs under different harnesses.} This schematic analogy keeps the task fixed while changing the interface. Under the raw reference harness $H_0$, which provides a knife and fork, the model judges the steak task to be doable and assigns a high probability of success. Under the risk-gated harness $H_2$, the chopstick interface is relabelled as a policy-violation condition, and the model assigns a lower probability of success. Here, $H_0$ and $H_2$ denote the same raw-reference and risk-gated harness roles used in the software experiments, and $K=5$ is the belief-rollout horizon used to compute $D_{\mathrm{belief}}(H_0,H_2;K)$. Although the task and base LLM remain unchanged, the harness changes the interaction evidence and therefore the induced belief about task difficulty and expected success.}

  \label{fig:intuition}

\end{figure*}

We measure this harness-induced belief shift directly. Rather than treating a software-agent world model as a learned transition function~\citep{hafner2018planet,hafner2023dreamerv3} or as a one-shot verifier, we define it at inference time as a \emph{belief trajectory}: a structured sequence describing how the LLM expects the task to evolve under the evidence provided by the harness~\citep{yao2023react,shinn2023reflexion}. We elicit a $K$-step belief rollout~\citep{hao2023rap,yao2023tot,deng2025simplanningworldmodel}, where each step records predicted task progress, active constraints, risk state, recoverability, uncertainty, likely failure mode, future success probability, and the recommended next action. To isolate the effect of the harness, we hold the task, environment, and base LLM fixed, vary only the harness configuration, and compare the resulting belief trajectories~\citep{ma2024agentboard}.

Figure~\ref{fig:overview} summarises both the measurement problem and our proposed response. For the same potentially destructive action branch, a raw harness may expose the command outcome directly, whereas a risk-gated harness may block the action and return a policy-violation signal. Although the underlying task and terminal benchmark label may remain unchanged, the model can form a different account of what is happening: it may shift from expecting an unsafe execution failure to expecting a policy-mediated failure, revise its estimate of recoverability, and recommend a different next action. This change is not visible in final success alone because it occurs within the intermediate belief trajectory that guides subsequent reasoning and harness control~\citep{liu2023agentbench,ma2024agentboard}.

\begin{figure*}[t]

  \centering

  \includegraphics[width=0.92\textwidth]{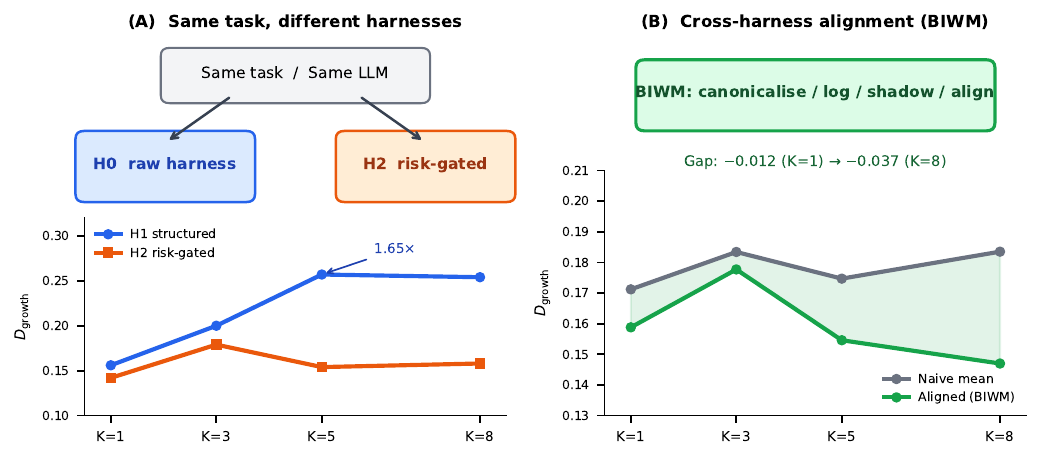}

  \caption{\textbf{Measurement overview.} The task, environment, base LLM, schema, and decoding rule are held fixed while the harness changes the evidence stream through observation abstraction, action gating, repair compression, selective verification, or logging. The diagnostic compares the resulting $K$-step belief trajectories with $\Dbel$ and then separates the score into the arrival readout $\Darr$, which captures immediate interface mismatch, and the growth readout $\Dgro$, which captures horizon-dependent changes in progress, risk, failure, and forecast fields.}

  \label{fig:overview}

\end{figure*}

A single divergence score is not enough to interpret such shifts. We use $\Dbel$ as an overall distance between structured belief rollouts, but decompose it into an arrival readout $\Darr$ and a growth readout $\Dgro$. The arrival readout captures interface-level differences that are present as soon as the harness rewrites actions or constraints. The growth readout captures fields that can keep moving as the imagined rollout unfolds, including risk state, failure-mode labels, uncertainty, and predicted success. This distinction is central to the paper: aggregate divergence can appear stable even while the belief components most relevant to planning continue to drift~\citep{ma2024agentboard}.

Prior work on calibration, guardrails, and agent risk monitoring studies when an agent should stop, defer, comply with a safety policy, or report uncertainty~\citep{checkyourself2025,toolsafe2026,agenttrust2026,duan2025uprop,wang2025steca,wilhelm2026riskagentstates}. Work on interactive reasoning and self-correction further shows that observations and environmental feedback can change later decisions~\citep{yao2023react,shinn2023reflexion}. It does not directly measure how the harness itself reshapes a multi-step belief trajectory. We make three contributions. First, we introduce a belief-rollout diagnostic for controlled cross-harness comparison in software-agent evaluation. Second, we propose the $\Darr$/$\Dgro$ decomposition and use it to identify harness-specific mechanisms across controlled and public-benchmark settings. Third, we introduce \BIWM, a no-training protocol that canonicalises beliefs, preserves blocked and repaired branches, records verification masks, executes risky branches in shadow, and aligns beliefs across harness views. Our goal is not to replace final-success metrics, but to make the belief dynamics behind them measurable and auditable.

\section{Related Work}

\paragraph{Agent harnesses and interfaces.}
Agent systems have always intertwined model reasoning with external execution interfaces. ReAct established the now-standard thought--act--observe loop~\citep{yao2023react}, and Reflexion demonstrated that verbal feedback loops can reshape subsequent agent behaviour~\citep{shinn2023reflexion}. In software engineering, SWE-agent elevated the agent-computer interface to a first-class design variable and showed that interface choices directly affect coding performance~\citep{yang2024sweagent}. Complementary lines of work have explored searching over modular agent design spaces~\citep{wang2024agentsquare}, adapting runtime harness parameters~\citep{xu2026lifeharness}, and building production-oriented agent platforms~\citep{wang2024openhands,wang2025openhands_sdk}. Program-repair agents further show that automatic retries and repair feedback are central to software-agent behaviour~\citep{bouzenia2024repairagent}. These contributions collectively establish that the harness is consequential, not incidental. Our contribution is orthogonal: rather than optimising a harness for task success, we hold the task and LLM fixed and measure how harness design changes the intermediate world-model belief trajectory---a quantity that prior work does not instrument.

\paragraph{LLM world models and multi-step reasoning.}
Framing an LLM as an implicit world model has a growing literature. RAP connected language-model chain-of-thought directly to world-model planning~\citep{hao2023rap}, while PlaNet and Dreamer demonstrated that latent world models can support effective long-horizon planning~\citep{hafner2018planet,hafner2023dreamerv3}. Recent work probes whether LLMs build stable internal state abstractions~\citep{vafa2024llmworldrep}, trains multi-turn world-model reasoning~\citep{shen2025vagen}, and surveys or generalises LLM-agent planning through world-model-style simulation~\citep{huang2024planningsurvey,deng2025simplanningworldmodel}. Our setting differs along two critical dimensions: we neither train a parametric world model nor treat a single-call verifier as a world model. Instead, the world model is a sequence of repeated inference-time belief updates whose intermediate JSON states are auditable and comparable across harnesses. WebDreamer~\citep{gu2024webdreamer} adapts the world-model framing to web navigation, showing that LLM-simulated planning over web actions reduces real-environment interaction cost. The key question we pursue---whether the \emph{harness}, not the model, is the primary source of belief drift---has not been addressed by prior work.

\paragraph{Agent benchmarks and evaluation.}
SWE-bench, WebArena, OSWorld, and $\tau$-bench have become standard evaluation suites for coding, web navigation, desktop-computer control, and tool-use agents, respectively~\citep{jimenez2023swbench,zhou2023webarena,xie2024osworld,yao2024taubench}. API-Bank, ToolBench/ToolLLM, Gorilla/APIBench, and the Berkeley Function Calling Leaderboard further evaluate API selection, tool invocation, and function-call correctness~\citep{li2023apibank,qin2023toolllm,patil2024gorilla,patil2025bfcl}. RedCode extends this landscape to risky code execution, making safety an explicit evaluation axis~\citep{he2024redcode}. Despite this breadth, reports on these benchmarks almost universally emphasise final success, pass rate, exact API-call accuracy, or token cost---metrics that collapse everything before the final state into a single outcome. Agent surveys covering coding, tool-use, and embodied tasks~\citep{luo2025llmagentsurvey,li2024surveymultiagent} confirm that final-success metrics are the universal reporting convention. Audits of agent-benchmark disclosure reveal that protocol and infrastructure details often remain under-specified, making cross-system comparisons difficult even when the benchmarks themselves are well-specified~\citep{benchmarkaudit2026}. Our controlled benchmark complements public suites by holding task and model constant while systematically varying the harness and logging intermediate belief states at each rollout horizon, thereby making the protocol itself the measurable object.

\paragraph{Calibration, safety, and guardrails.}
Uncertainty calibration in LLMs is an active area, building on classic probability-calibration work and newer studies of language-model self-evaluation~\citep{guo2017calibration,kadavath2022lmknow}, but the literature concentrates overwhelmingly on single-turn prediction; multi-step agent belief calibration across horizon remains much less developed~\citep{uqsurvey2025,duan2025uprop,wang2025steca,hiremath2026cdur}. Safety-oriented systems address when an agent should quit~\citep{checkyourself2025}, how to apply step-level guardrails~\citep{toolsafe2026}, how to intercept risky tool calls~\citep{agenttrust2026}, how to emulate risky tool execution in a sandbox~\citep{ruan2023toolemu}, how to evaluate prompt-injection attacks and defenses in tool-using agents~\citep{debenedetti2024agentdojo}, and how long-context or agentic risk states can destabilise safety mechanisms~\citep{hadeliya2025refusalsfail,wilhelm2026riskagentstates}. These mechanisms genuinely improve safety, but they have an underappreciated side effect: blocking an action also censors information. An agent that is never permitted to attempt a risky branch may conclude that such branches do not exist, rather than that they are prohibited---a belief that is wrong and potentially dangerous if the harness is later changed. \BIWM{} addresses this by incorporating guardrail metadata---blocked-action logs, verification masks, and shadow-execution traces---directly into the belief update, making the censorship visible rather than opaque.

The above lines of work converge on the same gap: the harness is a first-order variable in agent behaviour, yet no prior work measures its effect on multi-step belief trajectories. The next section establishes the formal definitions needed to fill this gap.

\section{Preliminaries}
\label{sec:preliminaries}

Fix a task $\mathcal{T}$, an execution environment $\mathcal{E}$, and a base LLM $M$.
All quantities below are conditioned on $(\mathcal{T},\mathcal{E},M)$; the only
variable is the harness that mediates the interaction between $M$ and
$\mathcal{E}$. Let $\mathcal{H}$ be the set of finite histories of
observation--action--outcome triples, and let $\mathcal{A}$ and $\mathcal{O}$
denote the full action and observation spaces of $\mathcal{E}$.
This formalisation follows the agent-evaluation convention of separating the
model, environment, and interaction interface~\citep{liu2023agentbench,ma2024agentboard,yang2024sweagent,xu2026lifeharness}, while making the interface itself the variable of interest.

\subsection{The Harness Six-Tuple}
\label{sec:prelim:harness}

\begin{definition}[Harness]
\label{def:harness}
A \emph{harness} is a six-tuple of measurable maps
\begin{equation}
H  =  \bigl(O_H, A_H, V_H, G_H, R_H, L_H\bigr),
\end{equation}
where:
\begin{itemize}[leftmargin=2em,itemsep=2pt]
  \item $O_H : \mathcal{H} \to \mathcal{O}_H$ is the \emph{observation map}
        ($\mathcal{O}_H \subseteq \mathcal{O}$ is the harness-filtered observation space);
  \item $A_H \subseteq \mathcal{A}$ is the \emph{action interface},
        equipped with a syntactic format function $\varphi_H : A_H \to \Sigma^*$
        mapping actions to string representations;
  \item $V_H : \mathcal{H} \times \mathcal{A} \to \{0, 1, \bot\}$ is the \emph{verifier},
        where $\bot$ denotes ``not checked'';
  \item $G_H : \mathcal{A} \to \mathcal{A} \cup \{\varnothing\}$ is the \emph{risk gate},
        where $G_H(a) = \varnothing$ indicates action $a$ is blocked pre-execution;
  \item $R_H : \mathcal{H} \to \mathcal{H}$ is the \emph{repair policy},
        applied post-failure to rewrite the history presented to $M$;
  \item $L_H : \mathcal{H} \to \mathcal{L}$ is the \emph{logging policy},
        projecting history onto the auditable log space $\mathcal{L}$.
\end{itemize}
\end{definition}

The \emph{raw reference harness} $H_{\mathrm{raw}}$ sets
$O_{H_{\mathrm{raw}}}=\mathrm{id}$,
$A_{H_{\mathrm{raw}}}=\mathcal{A}$,
$G_{H_{\mathrm{raw}}}=\mathrm{id}$,
$R_{H_{\mathrm{raw}}}=\mathrm{id}$,
and $L_{H_{\mathrm{raw}}}$ records the full history verbatim.

\begin{assumption}[Canonical Embedding]
\label{ass:canonical}
There exist canonical schema spaces $\mathcal{O}^*$ and $\mathcal{A}^*$,
and deterministic injections
$\iota_O : \bigcup_H \mathcal{O}_H \hookrightarrow \mathcal{O}^*$
and
$\iota_A : \bigcup_H A_H \hookrightarrow \mathcal{A}^*$,
such that $\iota_O$ and $\iota_A$ are computable from schema metadata alone,
independent of any specific harness instance.
\end{assumption}

Assumption~\ref{ass:canonical} removes syntactic non-identifiability: visible
observations and action strings are compared only after embedding into common
schema spaces. The \BIWM{} protocol in Section~\ref{sec:method} implements this
embedding.

\subsection{$K$-Step LLM Belief Rollout}
\label{sec:prelim:rollout}

\begin{definition}[Belief Space]
\label{def:beliefspace}
Let $\Sigma^*$ be the set of finite strings over an alphabet $\Sigma$, let $\mathcal{F}_+$ be a fixed set of eight non-null failure-mode labels, and let $\mathcal{F}=\mathcal{F}_+\cup\{\varnothing\}$ include the null failure label. The \emph{belief space} is
\begin{align}
\mathcal{B} &= \mathcal{B}^{\mathrm{cat}}\times\mathcal{B}^{\mathrm{fail}}\times\mathcal{B}^{\mathrm{set}} \nonumber\\
&\quad\times\mathcal{B}^{\mathrm{num}}\times\mathcal{B}^{\mathrm{act}},
\end{align}
where
\begin{align}
\mathcal{B}^{\mathrm{cat}} &= \{1,\ldots,5\}\times\{1,2,3\}\times\{1,2,3\},\\
\mathcal{B}^{\mathrm{fail}} &= \mathcal{F},\\
\mathcal{B}^{\mathrm{set}} &= 2^{\Sigma^*}\times 2^{\Sigma^*}\times 2^{\Sigma^*},\\
\mathcal{B}^{\mathrm{num}} &\subseteq [0,1]^5\times[0,\kappa]^2,\qquad \kappa=5.0,\\
\mathcal{B}^{\mathrm{act}} &= \mathcal{A}^*.
\end{align}
\end{definition}

The categorical component represents task progress, risk state, and recoverability, while the set-valued component represents known, satisfied, and violated constraints. The numeric component contains uncertainty and future-outcome forecasts, and the action component stores the recommended next action.

A belief state is written as
\begin{align}
b &= \bigl(p,r,c,f,\mathcal{K},\mathcal{S},\mathcal{V},u,q_{\mathrm{succ}},q_{\mathrm{fail}}, \nonumber\\
&\qquad e_{\mathrm{repair}},m_{\mathrm{hor}},\rho_{\mathrm{risk}},e_{\mathrm{cost}},a\bigr)\in\mathcal{B}.
\end{align}
This representation separates ordinal estimates, failure labels, constraint sets, numeric forecasts, and action recommendations, allowing each component to be compared independently across harnesses.

\begin{definition}[$K$-Step LLM Belief Rollout]
\label{def:rollout}
Fix a task $\mathcal{T}$, an environment $\mathcal{E}$, a base LLM $M$, a harness $H$, an initial history $h_0\in\mathcal{H}$, and a horizon $K\geq1$. Define the harness-mediated context
\begin{align}
\Gamma_H(h,a) &= \bigl(\iota_O(O_H(h)),V_H(h,a),\iota_A(G_H(a)), \nonumber\\
&\qquad R_H(h),L_H(h)\bigr).
\end{align}
Let $P_M^{\mathcal{B}}(\cdot\mid x)$ denote the distribution over valid belief states induced by $M$ under a fixed elicitation template, decoding rule, JSON schema, and validation procedure. The \emph{$K$-step LLM belief rollout} under $H$ is the stochastic process
\begin{equation}
B_{0:K}^H=(B_0^H,\ldots,B_K^H)
\end{equation}
defined by
\begin{align}
B_0^H &\sim P_M^{\mathcal{B}}\bigl(\cdot\mid\mathcal{T},\iota_O(O_H(h_0)),L_H(h_0)\bigr), \label{eq:b0}\\
A_t^H &= \pi_{\mathrm{act}}(B_t^H), \label{eq:belief-action}\\
B_{t+1}^H &\sim P_M^{\mathcal{B}}\bigl(\cdot\mid\mathcal{T},B_t^H,\Gamma_H(h_t,A_t^H)\bigr), \label{eq:bt}\\
h_{t+1} &= \Psi_{\mathcal{E},H}(h_t,A_t^H,\xi_t), \label{eq:history-transition}
\end{align}
for $t=0,\ldots,K-1$, where $\pi_{\mathrm{act}}:\mathcal{B}\to\mathcal{A}^*$ extracts the recommended action, $\Psi_{\mathcal{E},H}$ is the environment--harness transition operator, and $\xi_t$ denotes execution randomness.
\end{definition}

The initial belief is generated from the task and initial harness-visible context. Each later belief is conditioned on the preceding belief and the current harness-mediated evidence; if $G_H(A_t^H)=\varnothing$, the history records a blocked-action event rather than an environment execution.

The process is first-order on the augmented state $(B_t^H,h_t)$:
\begin{align}
&P\bigl(B_{t+1}^H\mid B_{0:t}^H,h_{0:t},A_{0:t}^H\bigr)\\ &= P\bigl(B_{t+1}^H\mid B_t^H,h_t,A_t^H,\mathcal{T},H\bigr).
\end{align}
The belief sequence alone need not be Markov because $B_t^H$ may not be sufficient to reconstruct the current interaction history.

For comparisons between harnesses $H_a$ and $H_b$, we hold $(\mathcal{T},\mathcal{E},M)$, the initial history, elicitation template, decoding procedure, and belief schema fixed, and use matched random seeds whenever possible. Differences between the distributions of $B_K^{H_a}$ and $B_K^{H_b}$ measure harness-conditioned belief differences; they isolate a specific harness component only when the two harnesses differ solely in that component.

\subsection{Belief Divergence}
\label{sec:prelim:dbel}

\begin{definition}[Component Distances]
\label{def:components}
For any $b_a,b_b\in\mathcal{B}$, define
\begin{equation}
\begin{aligned}
D_{\mathrm{cat}}(b_a,b_b)
&=
\frac{1}{3}\left(
\frac{|p_a-p_b|}{4}
+
\frac{|r_a-r_b|}{2}
\right.\\
&\qquad\left.
+
\frac{|c_a-c_b|}{2}
\right).
\end{aligned}
\label{eq:dcat}
\end{equation}
\begin{equation}
D_{\mathrm{fail}}(b_a,b_b)
=
\mathbf{1}\{f_a\neq f_b\}.
\label{eq:dfail}
\end{equation}
\begin{equation}
I_{ab}
=
|\mathcal{K}_a\cap\mathcal{K}_b|
+
|\mathcal{S}_a\cap\mathcal{S}_b|
+
|\mathcal{V}_a\cap\mathcal{V}_b|.
\label{eq:set-intersection}
\end{equation}
\begin{equation}
U_{ab}
=
|\mathcal{K}_a\cup\mathcal{K}_b|
+
|\mathcal{S}_a\cup\mathcal{S}_b|
+
|\mathcal{V}_a\cup\mathcal{V}_b|.
\label{eq:set-union}
\end{equation}
\begin{equation}
D_{\mathrm{set}}(b_a,b_b)
=
1-\frac{I_{ab}}{U_{ab}}.
\label{eq:dset}
\end{equation}
\begin{equation}
\begin{aligned}
D_{\mathrm{num}}(b_a,b_b)
&=
\frac{1}{d_{\mathrm{num}}}
\sum_{i=1}^{d_{\mathrm{num}}}
\frac{|b_a^{\mathrm{num},i}-b_b^{\mathrm{num},i}|}{\kappa_i}.
\end{aligned}
\label{eq:dnum}
\end{equation}
\begin{equation}
D_{\mathrm{act}}(b_a,b_b)
=
\mathbf{1}\{\eta_8(a_a)\neq\eta_8(a_b)\},
\label{eq:dact}
\end{equation}
where $\eta_8$ returns the first eight tokens after action-string normalisation. For~\eqref{eq:dset}, set $D_{\mathrm{set}}(b_a,b_b)=0$ when all six constraint sets are empty. For~\eqref{eq:dnum}, $d_{\mathrm{num}}$ is the number of numeric fields and $\kappa_i>0$ is the range used to normalise field $i$.
\end{definition}

The categorical distance averages normalised ordinal differences, whereas the failure-mode and action distances record nominal disagreement. The set distance compares the three constraint collections jointly, and the numeric distance averages field-wise normalised absolute deviations.

Each component lies in $[0,1]$ provided that every numeric field is restricted to its specified range. The component distances are symmetric and vanish when their corresponding belief fields agree.

\begin{definition}[Belief Divergence]
\label{def:dbel}
Let
\begin{equation}
\mathbf{w}=(w_{\mathrm{cat}},w_{\mathrm{fail}},w_{\mathrm{set}},w_{\mathrm{num}},w_{\mathrm{act}})^{\top}
\end{equation}
satisfy $\mathbf{w}\succeq\mathbf{0}$ and $\mathbf{1}^{\top}\mathbf{w}=1$. Define
\begin{align}
&\Dbel(H_a,H_b;K) \\
&= w_{\mathrm{cat}}D_{\mathrm{cat}}(b_K^{H_a},b_K^{H_b})+w_{\mathrm{fail}}D_{\mathrm{fail}}(b_K^{H_a},b_K^{H_b}) \nonumber\\
&\quad+w_{\mathrm{set}}D_{\mathrm{set}}(b_K^{H_a},b_K^{H_b})+w_{\mathrm{num}}D_{\mathrm{num}}(b_K^{H_a},b_K^{H_b}) \nonumber\\
&\quad+w_{\mathrm{act}}D_{\mathrm{act}}(b_K^{H_a},b_K^{H_b}).
\label{eq:dbel}
\end{align}
\end{definition}

The weights control the relative contribution of categorical state, failure mode, constraint content, numeric forecasts, and action recommendation. They are fixed before evaluation and shared across all tasks, horizons, and harness comparisons.

\begin{proposition}[Basic Properties of $\Dbel$]
\label{prop:pseudometric}
For any harnesses $H_a,H_b$ and $K\geq1$,
\begin{enumerate}[label=(\roman*),itemsep=2pt]
  \item $0\leq\Dbel(H_a,H_b;K)\leq1$;
  \item $\Dbel(H,H;K)=0$;
  \item $\Dbel(H_a,H_b;K)=\Dbel(H_b,H_a;K)$.
\end{enumerate}
Distinct harnesses may nevertheless satisfy $\Dbel(H_a,H_b;K)=0$ if they induce identical final-step beliefs.
\end{proposition}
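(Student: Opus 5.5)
The plan is to reduce each of the three claims to the corresponding property of the five component distances in Definition~\ref{def:components}. Then we transfer these properties to $\Dbel$ through the convex combination in~\eqref{eq:dbel}. Throughout, $\Dbel(H_a,H_b;K)$ is read as evaluated on the realised final beliefs $b_K^{H_a},b_K^{H_b}$. For claim (ii), these are taken under the matched-seed protocol of Section~\ref{sec:prelim:rollout}, so that running the same harness twice yields the same realisation $b_K^{H}$.

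\textbf{Step 1 (component ranges).} I would verify that each component maps $\mathcal{B}\times\mathcal{B}$ into $[0,1]$.
For $D_{\mathrm{cat}}$, we have $p\in\{1,\dots,5\}$ and $r,c\in\{1,2,3\}$. Hence the three normalised differences lie in $[0,1]$, and so does their average.
$D_{\mathrm{fail}}$ and $D_{\mathrm{act}}$ are indicators.
For $D_{\mathrm{set}}$, note that $|X\cap Y|\le|X\cup Y|$ termwise. This gives $0\le I_{ab}\le U_{ab}$, so the ratio lies in $[0,1]$ whenever $U_{ab}>0$. The case $U_{ab}=0$ (all six sets empty) is covered by the stated convention $D_{\mathrm{set}}=0$.
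One caveat is that the constraint sets lie in $2^{\Sigma^*}$ and could be infinite. I would restrict to the finite sets actually produced by the JSON schema, and state this restriction explicitly.
For $D_{\mathrm{num}}$, every numeric field lies in an interval of length $\kappa_i$ (that is, $[0,1]$ or $[0,\kappa]$). Each summand is therefore in $[0,1]$, and so is their average.

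\textbf{Step 2 (symmetry and reflexivity of components).} Each component is built from symmetric primitives: $|x-y|$, $\mathbf{1}\{x\neq y\}$, and $\cap,\cup$. So each satisfies $D_\bullet(b_a,b_b)=D_\bullet(b_b,b_a)$.
Setting $b_a=b_b$ gives zero in every case. The absolute differences vanish and the indicators vanish. For the set term, $I=U$, so $D_{\mathrm{set}}=0$, or the empty-set convention applies.

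\textbf{Step 3 (transfer to $\Dbel$).} Since $\mathbf{w}\succeq\mathbf{0}$ and $\mathbf{1}^\top\mathbf{w}=1$, $\Dbel$ is a convex combination of numbers in $[0,1]$. This gives (i).
Claim (iii) follows from Step 2 by summing the symmetric components.
For (ii), take $H_a=H_b=H$. Then $b_K^{H_a}=b_K^{H_b}$ under the identical-input, matched-seed reading, so every component vanishes by Step 2 and $\Dbel=0$.

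\textbf{Closing remark and main obstacle.} The remark about distinct harnesses needs only an example. Take two harnesses that differ in, say, $L_H$, but whose final beliefs coincide. Then every component is zero while $H_a\neq H_b$. So $\Dbel$ is at most a pseudometric on harnesses, and indeed I would not claim the triangle inequality at all.

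The only genuinely delicate point is (ii). Belief rollouts are stochastic, so two independent draws under the same $H$ need not agree. I would therefore state explicitly that the proposition concerns the realised, seed-matched final beliefs, which is the reading consistent with~\eqref{eq:dbel}. Under an expectation-over-draws reading, (ii) would fail in general, while (i) and (iii) would still hold by linearity.
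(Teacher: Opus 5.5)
Your proposal is correct and follows the paper's proof essentially step for step. The paper also places the five component distances in $[0,1]$ and uses the simplex weights to get (i), derives (ii) and (iii) from componentwise reflexivity and symmetry, and obtains the closing remark by supposing $b_K^{H_a}=b_K^{H_b}$. Your two additions, restricting the constraint sets to finite sets and reading (ii) as a statement about realised, seed-matched final beliefs, make explicit assumptions the paper leaves implicit by treating $b_K^H$ as a single fixed object.
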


Proposition~\ref{prop:pseudometric} establishes boundedness, reflexivity, and symmetry, but not identity of indiscernibles at the harness level. It also does not assert the triangle inequality; therefore, $\Dbel$ is used as a symmetric divergence rather than a metric on the harness space. The proof is in Appendix~\ref{app:proof-pseudometric}.

\subsection{Arrival and Growth Decomposition}
\label{sec:prelim:argro}

Let
\begin{equation}
w_{\mathrm{arr}}=w_{\mathrm{set}}+w_{\mathrm{act}}
\end{equation}
and
\begin{equation}
w_{\mathrm{gro}}=w_{\mathrm{cat}}+w_{\mathrm{fail}}+w_{\mathrm{num}}.
\end{equation}
Because the component weights are non-negative and sum to one, $w_{\mathrm{arr}}+w_{\mathrm{gro}}=1$. We assume $w_{\mathrm{arr}}>0$ and $w_{\mathrm{gro}}>0$ so that both normalised readouts are well-defined.

\begin{definition}[Arrival and Growth Readouts]
\label{def:argro}
For harnesses $H_a,H_b$ and horizon $K$, define
\begin{align}
\Darr(H_a,H_b;K)
&=
\frac{w_{\mathrm{set}}}{w_{\mathrm{arr}}}
D_{\mathrm{set}}(b_K^{H_a},b_K^{H_b})
\nonumber\\
&\quad+
\frac{w_{\mathrm{act}}}{w_{\mathrm{arr}}}
D_{\mathrm{act}}(b_K^{H_a},b_K^{H_b}).
\label{eq:darr}
\end{align}
Define
\begin{align}
\Dgro(H_a,H_b;K)
&=
\frac{w_{\mathrm{cat}}}{w_{\mathrm{gro}}}
D_{\mathrm{cat}}(b_K^{H_a},b_K^{H_b})
\nonumber\\
&\quad+
\frac{w_{\mathrm{fail}}}{w_{\mathrm{gro}}}
D_{\mathrm{fail}}(b_K^{H_a},b_K^{H_b})
\nonumber\\
&\quad
+
\frac{w_{\mathrm{num}}}{w_{\mathrm{gro}}}
D_{\mathrm{num}}(b_K^{H_a},b_K^{H_b}).
\label{eq:dgro}
\end{align}
\end{definition}

The arrival readout aggregates constraint-set and action differences, which directly reflect the interface exposed by the harness. The growth readout aggregates categorical state, failure mode, and numeric forecasts, which may continue to change as the rollout horizon increases.

By construction, the overall divergence admits the exact decomposition
\begin{equation}
\begin{aligned}
&\Dbel(H_a,H_b;K)=w_{\mathrm{arr}}\Darr(H_a,H_b;K)\\&+w_{\mathrm{gro}}\Dgro(H_a,H_b;K).
\label{eq:decomposition}
\end{aligned}
\end{equation}
Thus, $\Dbel$ is a weighted average of the two normalised readouts rather than an additional independent quantity. We omit the horizon argument $K$ in tables when it is fixed or clear from context.

For a belief $b$, define its aggregate constraint set as
\begin{equation}
\mathcal{C}(b)=\mathcal{K}(b)\cup\mathcal{S}(b)\cup\mathcal{V}(b).
\label{eq:aggregate-constraint-set}
\end{equation}
This notation isolates the condition required for the constraint-set distance to attain its maximal value.

\begin{lemma}[Constraint-Set Floor]
\label{lem:floor}
Suppose that
\begin{equation}
\mathcal{C}(b_K^{H_a})\cap\mathcal{C}(b_K^{H_b})=\varnothing
\label{eq:constraint-disjointness}
\end{equation}
and
\begin{equation}
\mathcal{C}(b_K^{H_a})\cup\mathcal{C}(b_K^{H_b})\neq\varnothing.
\label{eq:constraint-nonempty}
\end{equation}
Then
\begin{equation}
D_{\mathrm{set}}(b_K^{H_a},b_K^{H_b})=1
\end{equation}
and consequently
\begin{equation}
\Dbel(H_a,H_b;K)\geq w_{\mathrm{set}}.
\label{eq:floor}
\end{equation}
\end{lemma}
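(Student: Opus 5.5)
The plan is to show that the intersection count $I_{ab}$ in \eqref{eq:set-intersection} vanishes while the union count $U_{ab}$ in \eqref{eq:set-union} is positive. This gives $D_{\mathrm{set}}=1$ directly from \eqref{eq:dset}. The lower bound on $\Dbel$ then follows from the nonnegativity of the weights and of the other component distances.

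\textbf{Step 1: $I_{ab}=0$.} Write $b_a=b_K^{H_a}$ and $b_b=b_K^{H_b}$. Since $\mathcal{K}_a\subseteq\mathcal{C}(b_a)$ and $\mathcal{K}_b\subseteq\mathcal{C}(b_b)$, we get $\mathcal{K}_a\cap\mathcal{K}_b\subseteq\mathcal{C}(b_a)\cap\mathcal{C}(b_b)$. By \eqref{eq:constraint-disjointness} this set is empty. The same inclusion argument applies to $\mathcal{S}$ and $\mathcal{V}$, so all three intersection cardinalities are zero.

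\textbf{Step 2: $U_{ab}>0$.} By \eqref{eq:constraint-nonempty}, at least one of the six sets $\mathcal{K}_a,\mathcal{S}_a,\mathcal{V}_a,\mathcal{K}_b,\mathcal{S}_b,\mathcal{V}_b$ is nonempty. That set is contained in the corresponding pairwise union, so at least one summand of $U_{ab}$ is positive.

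This is the step where care is needed. It excludes the degenerate convention that sets $D_{\mathrm{set}}=0$ when all six sets are empty, and it keeps the ratio $I_{ab}/U_{ab}$ well defined. We also need $U_{ab}<\infty$. I would justify this by noting that the elicited belief states are finite JSON objects, so each constraint set is finite. Otherwise the cardinalities in Definition~\ref{def:components} would not be meaningful.

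\textbf{Step 3: $D_{\mathrm{set}}=1$.} Steps 1 and 2 give $D_{\mathrm{set}}(b_a,b_b)=1-0/U_{ab}=1$.

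\textbf{Step 4: the bound \eqref{eq:floor}.} Expand $\Dbel$ via \eqref{eq:dbel}. Each of the terms $w_{\mathrm{cat}}D_{\mathrm{cat}}$, $w_{\mathrm{fail}}D_{\mathrm{fail}}$, $w_{\mathrm{num}}D_{\mathrm{num}}$ and $w_{\mathrm{act}}D_{\mathrm{act}}$ is a product of a nonnegative weight and a nonnegative component distance. These distances are absolute values, indicators, or normalized sums of absolute values, as noted after Definition~\ref{def:components}. Dropping these four terms therefore gives $\Dbel\geq w_{\mathrm{set}}\cdot 1=w_{\mathrm{set}}$.

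Equivalently, by \eqref{eq:decomposition} we have $w_{\mathrm{arr}}\Darr\geq w_{\mathrm{set}}D_{\mathrm{set}}=w_{\mathrm{set}}$, and $w_{\mathrm{gro}}\Dgro\geq 0$, which yields the same bound.
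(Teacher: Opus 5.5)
Your proposal is correct and follows the same route as the paper's proof: every intersection term in $D_{\mathrm{set}}$ vanishes and the denominator is positive, so $D_{\mathrm{set}}=1$, and dropping the remaining nonnegative weighted terms of $\Dbel$ gives $\Dbel\geq w_{\mathrm{set}}$. Your inclusion argument for each of $\mathcal{K},\mathcal{S},\mathcal{V}$ and your finiteness remark fill in details that the paper's proof leaves implicit.
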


The result gives a lower bound caused solely by disjoint constraint representations. It does not require the remaining belief components to differ. The proof is in Appendix~\ref{app:proof-floor}.

\begin{corollary}[Arrival Floor]
\label{cor:floor}
Under the conditions of Lemma~\ref{lem:floor},
\begin{equation}
\Darr(H_a,H_b;K)\geq\frac{w_{\mathrm{set}}}{w_{\mathrm{arr}}}.
\label{eq:arrival-floor}
\end{equation}
If, in addition,
\begin{equation}
D_{\mathrm{act}}(b_K^{H_a},b_K^{H_b})=1,
\end{equation}
then
\begin{equation}
\Darr(H_a,H_b;K)=1.
\end{equation}
\end{corollary}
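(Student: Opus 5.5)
The plan is to read both claims of Corollary~\ref{cor:floor} directly off the definition of $\Darr$ in~\eqref{eq:darr}, using Lemma~\ref{lem:floor} to fix the value of the constraint-set term. Under the disjointness and non-emptiness conditions~\eqref{eq:constraint-disjointness}--\eqref{eq:constraint-nonempty}, Lemma~\ref{lem:floor} already gives $D_{\mathrm{set}}(b_K^{H_a},b_K^{H_b})=1$. Substituting this into~\eqref{eq:darr} yields
\begin{equation}
\Darr(H_a,H_b;K)=\frac{w_{\mathrm{set}}}{w_{\mathrm{arr}}}+\frac{w_{\mathrm{act}}}{w_{\mathrm{arr}}}D_{\mathrm{act}}(b_K^{H_a},b_K^{H_b}).
\end{equation}

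For the inequality~\eqref{eq:arrival-floor}, I only need the second summand to be non-negative. This holds because $w_{\mathrm{act}}\geq 0$ (since $\mathbf{w}\succeq\mathbf{0}$), $w_{\mathrm{arr}}>0$ by the standing assumption in Section~\ref{sec:prelim:argro}, and $D_{\mathrm{act}}$ is an indicator and so takes values in $\{0,1\}$. Dropping that summand gives the bound $w_{\mathrm{set}}/w_{\mathrm{arr}}$.

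For the equality case, I substitute $D_{\mathrm{act}}=1$. The sum becomes $(w_{\mathrm{set}}+w_{\mathrm{act}})/w_{\mathrm{arr}}$, which equals $1$ by the definition $w_{\mathrm{arr}}=w_{\mathrm{set}}+w_{\mathrm{act}}$.

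There is no genuine obstacle, since the argument is pure substitution. The only point that needs care is that the division by $w_{\mathrm{arr}}$ is legitimate, and this rests on the explicit assumption $w_{\mathrm{arr}}>0$, which I will cite. It is also worth stating that the argument never needs to re-derive $D_{\mathrm{set}}=1$ from the Jaccard formula~\eqref{eq:dset}, because the lemma supplies that value.
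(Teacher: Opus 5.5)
Your proposal is correct and follows the paper's proof essentially step for step. You invoke Lemma~\ref{lem:floor} to set $D_{\mathrm{set}}=1$, substitute into the definition of $\Darr$, drop the non-negative action term to get the floor, and use $w_{\mathrm{arr}}=w_{\mathrm{set}}+w_{\mathrm{act}}$ for the equality case; your explicit appeal to $w_{\mathrm{act}}\geq 0$ and $w_{\mathrm{arr}}>0$ just makes precise what the paper uses implicitly.
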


The corollary shows that disagreement in constraint content creates a fixed arrival floor. Maximal action disagreement raises the normalised arrival readout to its upper bound. The proof is in Appendix~\ref{app:proof-growth-floor}.

\subsection{Censoring-Induced Growth Monotonicity}
\label{sec:prelim:monotone}

We next state sufficient conditions under which the expected growth divergence is non-decreasing with the rollout horizon. The result does not claim that censoring alone guarantees monotonicity; it identifies the additional propagation condition required for such behavior.

\begin{lemma}[Failure-Mode Sensitivity to Censoring]
\label{lem:censor}
Let $H$ block an action $a^*\in\mathcal{A}$ such that $G_H(a^*)=\varnothing$, while the raw reference harness $H_{\mathrm{raw}}$ executes $a^*$. Let $x_{\mathrm{raw}}$ and $x_H$ denote the resulting canonical contexts presented to the LLM under $H_{\mathrm{raw}}$ and $H$, respectively. If
\begin{equation}
P_M^{\mathrm{fail}}(\cdot\mid x_{\mathrm{raw}})\neq P_M^{\mathrm{fail}}(\cdot\mid x_H),
\label{eq:failure-context-sensitivity}
\end{equation}
then there exists a coupling of the two failure-mode variables under which
\begin{equation}
\Pr\bigl(F_1^{H_{\mathrm{raw}}}\neq F_1^H\bigr)>0.
\label{eq:censor-failure-divergence}
\end{equation}
\end{lemma}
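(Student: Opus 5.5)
The statement is existential in the coupling, so the plan is to pick the simplest coupling available: the independent (product) coupling of the two failure-mode marginals. Write $\mu=P_M^{\mathrm{fail}}(\cdot\mid x_{\mathrm{raw}})$ and $\nu=P_M^{\mathrm{fail}}(\cdot\mid x_H)$. Both are probability distributions on the finite label set $\mathcal{F}$, which has nine elements: the eight labels in $\mathcal{F}_+$ together with $\varnothing$. By Definition~\ref{def:rollout}, $F_1^{H_{\mathrm{raw}}}$ and $F_1^H$ are the failure-mode coordinates of $B_1^{H_{\mathrm{raw}}}$ and $B_1^H$. Given the canonical contexts $x_{\mathrm{raw}}$ and $x_H$, their laws are exactly $\mu$ and $\nu$; these contexts are well-defined elements of a common space by Assumption~\ref{ass:canonical}. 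Under the product coupling $\mu\otimes\nu$, we have
\begin{equation*}
\Pr\bigl(F_1^{H_{\mathrm{raw}}}=F_1^H\bigr)=\sum_{f\in\mathcal{F}}\mu(f)\nu(f),
\end{equation*}
so it suffices to show that this sum is strictly less than one.

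The key step is a short inequality. Since $\sum_f\mu(f)\nu(f)\le\sum_f\mu(f)\max_g\nu(g)=\max_g\nu(g)\le 1$, equality throughout would require two things. First, $\nu$ must be a point mass, say at $g^*$. Second, $\mu$ must then put all its mass on $g^*$, so $\mu$ is the same point mass. That gives $\mu=\nu$, which contradicts hypothesis~\eqref{eq:failure-context-sensitivity}. Hence $\Pr(F_1^{H_{\mathrm{raw}}}\neq F_1^H)=1-\sum_f\mu(f)\nu(f)>0$.

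As a sanity check, one could instead use the maximal coupling, under which the mismatch probability equals the total-variation distance $\tfrac12\sum_f|\mu(f)-\nu(f)|$. This is also strictly positive exactly when $\mu\neq\nu$. Either coupling proves the claim. I would mention that the maximal coupling gives the \emph{smallest} achievable mismatch probability, so positivity holds under every coupling, which strengthens the lemma.

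I expect the main obstacle to be bookkeeping rather than mathematics. One has to make explicit that the blocking event $G_H(a^*)=\varnothing$ enters only through the context, so that conditioning on $x_{\mathrm{raw}}$ and $x_H$ fully determines the two marginals. One also has to confirm that the matched-seed convention does not fix a particular coupling; if it did, the lemma would be about that coupling, not an existential one. I would state that the lemma is existential and that the total-variation remark covers any coupling, including the seed-matched one.
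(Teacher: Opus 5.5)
Your proof is correct, but it takes a different route from the paper's. The paper constructs no particular coupling. It argues by contradiction: if some coupling had $\Pr(F_1^{H_{\mathrm{raw}}}\neq F_1^H)=0$, then $F_1^{H_{\mathrm{raw}}}=F_1^H$ almost surely, so the two marginals would coincide, contradicting $\mu\neq\nu$. Hence \emph{every} coupling has positive mismatch probability, and existence follows at once.

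Your main argument instead fixes the product coupling $\mu\otimes\nu$ and shows $\sum_f\mu(f)\nu(f)<1$ unless $\mu=\nu$ is a common point mass. This is valid and constructive. However, it uses the discrete structure of $\mathcal{F}$ and an extra inequality chain, and on its own it certifies only one coupling.

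Your total-variation remark is the quantitative version of the paper's one-line argument. For any coupling $(X,Y)$ of $\mu$ and $\nu$, the coupling inequality $\Pr(X\neq Y)\geq\tfrac12\sum_f|\mu(f)-\nu(f)|$ follows from $\mu(A)-\nu(A)\leq\Pr(X\in A,\,Y\notin A)\leq\Pr(X\neq Y)$. It gives an explicit lower bound on the disagreement rate under any coupling, including a seed-matched one. The paper's contrapositive gives only positivity, but it needs no computation and no finiteness.

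If you keep the maximal-coupling remark, add that one-line justification, because the ``smallest achievable'' claim is what carries the every-coupling strengthening. With it in place, the product-coupling computation becomes unnecessary.
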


The condition requires the LLM's failure-mode distribution to be sensitive to the distinction between an executed action and a blocked action. Positivity of every failure label alone is insufficient, because two distinct contexts may still induce the same conditional distribution. The proof is in Appendix~\ref{app:proof-censor}.

\begin{theorem}[Censoring-Induced Growth Monotonicity]
\label{thm:monotone}
Let $H_{\mathrm{raw}}$ be the raw reference harness and let $H$ block at least one action that $H_{\mathrm{raw}}$ executes. Suppose that:
\begin{enumerate}[label=(C\arabic*),itemsep=2pt]
  \item \textbf{Context sensitivity.} The blocked and executed branches satisfy the condition in Lemma~\ref{lem:censor}.
  \item \textbf{First-order propagation.} Under each harness, the augmented process $(B_t^H,h_t)$ satisfies the first-order property in Definition~\ref{def:rollout}.
  \item \textbf{Non-negative expected growth increment.} For every $t\geq1$,
\begin{equation}
\begin{aligned}
&\sum_{i\in\mathcal{I}_{\mathrm{gro}}}\tilde{w}_i\,
\mathbb{E}\Bigl[
D_i(B_{t+1}^{H_{\mathrm{raw}}},B_{t+1}^{H})
\\
&\qquad\qquad\qquad
-D_i(B_t^{H_{\mathrm{raw}}},B_t^{H})
\Bigr]\geq0,
\end{aligned}
\label{eq:component-increment-condition}
\end{equation}
where $\mathcal{I}_{\mathrm{gro}}=\{\mathrm{cat},\mathrm{fail},\mathrm{num}\}$ and $\tilde{w}_i=w_i/w_{\mathrm{gro}}$.
\end{enumerate}
Then, for every $K\geq1$,
\begin{equation}
\begin{aligned}
&\mathbb{E}\!\left[\Dgro(H_{\mathrm{raw}},H;K+1)\right] \\
&\qquad\geq
\mathbb{E}\!\left[\Dgro(H_{\mathrm{raw}},H;K)\right].
\end{aligned}
\label{eq:monotone}
\end{equation}
\end{theorem}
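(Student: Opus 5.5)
The plan is to reduce the statement to condition (C3) by unfolding the definition of $\Dgro$ and using linearity of expectation. By Definition~\ref{def:argro}, for each fixed horizon $K$ the growth readout is a fixed non-negative linear combination of component distances evaluated at the final-step beliefs:
\begin{equation*}
\Dgro(H_{\mathrm{raw}},H;K)=\sum_{i\in\mathcal{I}_{\mathrm{gro}}}\tilde{w}_i\,D_i\bigl(B_K^{H_{\mathrm{raw}}},B_K^{H}\bigr).
\end{equation*}
The weights $\tilde{w}_i=w_i/w_{\mathrm{gro}}$ are deterministic constants, fixed before evaluation and shared across horizons. It is therefore legitimate to take expectations termwise.

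The steps, in order, are as follows. First, confirm integrability. Each $D_i$ takes values in $[0,1]$, as noted after Definition~\ref{def:components}, because the numeric fields are range-restricted. Hence every expectation is finite, and linearity applies without further conditions. This step needs the rollouts $B^{H_{\mathrm{raw}}}_{0:K+1}$ and $B^{H}_{0:K+1}$ to live on a common probability space, namely the matched-seed coupling described after Definition~\ref{def:rollout}. Condition (C2) guarantees that each process is well defined step by step, so the joint law of $(B_K,B_{K+1})$ exists under each harness. Second, write
\begin{equation*}
\mathbb{E}[\Dgro(\cdot;K+1)]-\mathbb{E}[\Dgro(\cdot;K)]=\sum_{i\in\mathcal{I}_{\mathrm{gro}}}\tilde{w}_i\,\mathbb{E}\bigl[D_i(B_{K+1}^{H_{\mathrm{raw}}},B_{K+1}^{H})-D_i(B_K^{H_{\mathrm{raw}}},B_K^{H})\bigr].
\end{equation*}
Third, apply (C3) with $t=K\geq1$ to conclude that this difference is non-negative, which gives~\eqref{eq:monotone}. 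A telescoping remark then extends the result to $\mathbb{E}[\Dgro(\cdot;K')]\geq\mathbb{E}[\Dgro(\cdot;K)]$ for all $K'\geq K$.

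I would also explain the role of (C1), which is not strictly needed for the inequality itself. By Lemma~\ref{lem:censor}, (C1) gives $\Pr(F_1^{H_{\mathrm{raw}}}\neq F_1^H)>0$, so $\mathbb{E}[D_{\mathrm{fail}}]$ at $t=1$ is strictly positive. Together with $w_{\mathrm{fail}}>0$, this makes $\mathbb{E}[\Dgro(\cdot;1)]>0$. The monotone sequence then starts from a strictly positive base and stays bounded away from zero. I would state this as a non-degeneracy remark rather than as part of the core inequality.

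The main obstacle is bookkeeping rather than mathematics. The expectation in (C3) and in~\eqref{eq:monotone} must be taken under the same coupling of the two rollouts, and that coupling must be consistent across horizons $K$ and $K+1$. In other words, $B_K$ in the horizon-$(K+1)$ rollout must have the same law as the final belief of the horizon-$K$ rollout. I would handle this by arguing that, by (C2) and Definition~\ref{def:rollout}, the horizon-$K$ rollout is exactly the prefix of the horizon-$(K+1)$ rollout. The elicitation at step $t$ depends only on $(B_t,h_t,A_t)$ and not on $K$, so under matched seeds the marginals agree and the termwise comparison is valid.
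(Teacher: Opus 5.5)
Your proposal is correct and follows the same route as the paper's proof: write $\Dgro$ at horizons $K$ and $K+1$ as $\tilde{w}_i$-weighted sums of the growth components, subtract, take expectations by linearity, and apply (C3) at $t=K$, with (C1) and (C2) playing no role in the inequality itself (your integrability and prefix-coupling remarks spell out what the paper leaves implicit). One small caution on your side remark: Lemma~\ref{lem:censor} is conditional on the specific contexts $x_{\mathrm{raw}},x_H$, so concluding $\mathbb{E}[D_{\mathrm{fail}}(B_1^{H_{\mathrm{raw}}},B_1^{H})]>0$ unconditionally also requires that the blocked branch is reached at step $0$ with positive probability.
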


Condition~(C3) states that the expected weighted change in the growth components is non-negative at each step. The theorem therefore formalizes when a censoring-induced difference persists or accumulates, rather than proving that every censored rollout must diverge monotonically. The proof is in Appendix~\ref{app:proof-monotone}.

\section{Method}
\label{sec:method}

\begin{figure*}[t]
  \centering
  \includegraphics[width=0.88\linewidth]{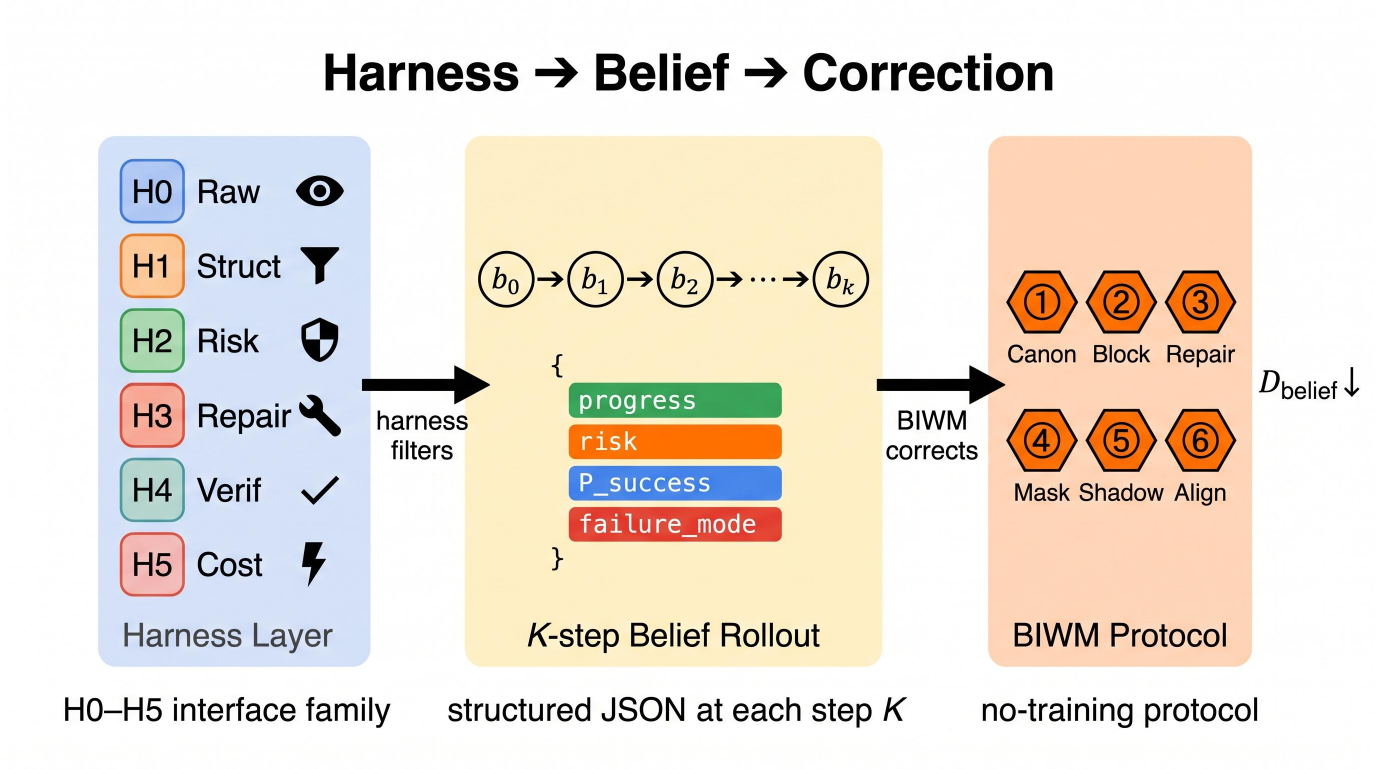}
\caption{\textbf{Harness-to-belief instrumentation pipeline.} The harness layer instantiates six execution interfaces, from the raw reference harness H0 to structured, risk-gated, repair-heavy, verification-selective, and cost-aware variants H1--H5. Each harness filters the evidence presented to the fixed LLM and induces a structured $K$-step belief rollout over progress, risk, success probability, and failure mode. The no-training \BIWM{} protocol then applies canonicalisation, blocked-action logging, repair unrolling, verification masks, shadow execution, and cross-harness alignment to expose harness-conditioned belief distortion; the alignment step reduces dependence on any single harness-conditioned trajectory.}
  \label{fig:pipeline}
\end{figure*}

Figure~\ref{fig:pipeline} summarises the three-stage pipeline from harness mediation to belief rollout and protocol-level correction. The following paragraphs specify the harness family, rollout protocol, and \BIWM{} components.

\paragraph{Harness family.}
We instantiate six harnesses to isolate distinct mechanisms through which an execution interface can alter the evidence available to an agent. Prior work has shown that the interface between an LLM agent and its environment can materially affect action selection and software-engineering performance~\citep{yao2023react,yang2024sweagent,xu2026lifeharness}. The raw reference harness exposes unfiltered task observations and records all executed actions. The structured harness replaces raw terminal output with parsed tracebacks, targeted verification results, and explicit intermediate states. The risk-gated harness intercepts high-risk actions before execution, records the blocking decision, and withholds the unrealised execution outcome. The repair-heavy harness retries, patches, or rolls back after failures, presenting the repaired transition while compressing the preceding failure path. The verification-selective harness applies strong checks only to selected states and labels the remaining states as weakly verified or unverified. The cost-aware harness omits expensive checks when the remaining budget is insufficient. Together, these harnesses isolate observation abstraction, action censoring, transition compression, incomplete verification, and cost-driven evidence removal.

\paragraph{Evaluation role of baselines and benchmarks.}
The comparison is deliberately not a leaderboard over agent policies. ReAct, Reflexion, Toolformer, SWE-agent, OpenHands, API-Bank, ToolBench, Gorilla/APIBench, and BFCL define important reasoning, tool-use, software-engineering, and API/function-calling baselines~\citep{yao2023react,shinn2023reflexion,schick2023toolformer,yang2024sweagent,wang2024openhands,li2023apibank,qin2023toolllm,patil2024gorilla,patil2025bfcl}. Our controlled variable is different: the task, base LLM, decoding rule, belief schema, and rollout procedure remain fixed, and only the harness-mediated evidence changes. Public benchmarks enter later as stress tests for this measurement protocol rather than as claims of state-of-the-art task performance.

\paragraph{Rollout protocol.}
For each task, harness, random seed, and horizon $K\in\{1,3,5,8\}$, we hold the base LLM, belief elicitation template, JSON schema, and decoding procedure fixed. This controlled design follows the general observation that interleaved actions and environment feedback can change later reasoning and decisions~\citep{yao2023react,shinn2023reflexion,ma2024agentboard}. At each rollout step, we write one JSONL record containing the task identifier, harness identifier, horizon, step index, harness-visible observation, candidate actions, selected action, blocked-action metadata, verification mask, repair indicator, shadow-execution field, and complete belief state. These records form the measurement basis for comparing harness-conditioned belief trajectories. Without step-level records of both exposed and withheld evidence, harness effects cannot be separated cleanly from ordinary model variation.

\paragraph{Belief-Instrumented World Model (\BIWM).}
\BIWM{} is a no-training protocol that operates on the step-level records produced by each harness. Unlike methods that train a model to select or invoke external tools~\citep{schick2023toolformer}, \BIWM{} leaves the base LLM fixed and modifies only the evidence representation used for belief analysis.

\emph{Canonicalisation} maps heterogeneous harness observations into a shared schema, reducing differences caused only by representation format. \emph{Blocked-action logging} records the proposed action, blocking reason, and applicable safety conditions so that a censored branch remains visible to the belief model. \emph{Repair-unrolled logging} expands each failure--repair--recovery sequence into separate transitions, preserving failure evidence that a repair-heavy harness may otherwise compress. The \emph{verification mask} records whether a state was checked, which verifier was used, and the associated cost. \emph{Shadow execution} evaluates blocked or risky branches in a sandbox or dry-run environment, providing counterfactual evidence without affecting the live task state. Finally, \emph{cross-harness alignment} combines multiple harness views for the same task and seed using categorical voting and numeric averaging, while retaining cross-view disagreement as an uncertainty signal.

\begin{algorithm}[t]
\caption{$K$-step belief rollout under a fixed harness}
\label{alg:rollout}
\begin{algorithmic}[1]
\STATE Initialise structured belief $b_0$ from the task instruction and harness metadata.
\FOR{$t=0$ \textbf{to} $K-1$}
  \STATE Build the rollout context from $b_t$, current observation $O_H(h_t)$, action history, verification history, repair history, blocked-action log, and harness metadata.
  \STATE Query the fixed LLM for JSON belief $\tilde b_{t+1}$ and next-action recommendation.
  \STATE Validate $\tilde b_{t+1}$ against the belief schema; write the step JSONL record.
  \STATE Set $b_{t+1}\leftarrow\tilde b_{t+1}$.
\ENDFOR
\STATE \textbf{Return} $b_K$ and the full belief trajectory.
\end{algorithmic}
\end{algorithm}

Algorithm~\ref{alg:rollout} gives the rollout procedure in pseudocode. The call count scales with $N_{\mathrm{task}}\times N_{\mathrm{harness}}\times N_{\mathrm{seed}}\times\sum_K K$. We deliberately keep the controlled pilot small enough to log every step and recompute every metric deterministically, because mechanism attribution matters more here than raw benchmark scale.

\section{Controlled Harness Mechanism Study}
\label{sec:phase1}

\paragraph{Setup.}
We first evaluate \hibench-v0, a controlled coding benchmark designed to expose harness mechanisms rather than maximise task difficulty. This mechanism-first design complements outcome-oriented coding and agent benchmarks such as SWE-bench, AgentBench, and AgentBoard~\citep{jimenez2023swbench,liu2023agentbench,ma2024agentboard}. The benchmark contains eight tasks covering distinct software failures, including off-by-one errors, missing null checks, import cycles, and a destructive-action trap. The evaluation grid crosses six harnesses, eight tasks, four rollout horizons, and three random seeds. All rollouts complete successfully and satisfy the belief-schema constraints. Table~\ref{tab:phase1} reports final-step divergence averaged over task--seed pairs for each harness comparison and horizon.

\begin{table*}[t]
\centering
\small
\caption{\textbf{Controlled-study divergence relative to the raw reference harness.} H0 denotes the raw reference harness; H1 structured parsing; H2 risk gating; H3 repair-heavy execution; H4 verification-selective execution; and H5 cost-aware execution. Each mediated harness is compared with H0 using overall belief divergence $\Dbel$, arrival/interface divergence $\Darr$, and growth/belief-state divergence $\Dgro$ across four rollout horizons. Bold values indicate the largest $\Dbel$ or $\Dgro$ at each horizon.}
\label{tab:phase1}
\begin{tabular}{llrrrr}
\toprule
Comparison & Metric & $K=1$ & $K=3$ & $K=5$ & $K=8$ \\
\midrule
Raw--structured
 & $\Dbel$ & 0.409 & 0.437 & \textbf{0.479} & \textbf{0.478} \\
 & $\Darr$ & 0.999 & 0.990 & 0.998 & 1.000 \\
 & $\Dgro$ & 0.156 & 0.200 & \textbf{0.257} & \textbf{0.254} \\
\midrule
Raw--risk-gated
 & $\Dbel$ & 0.398 & 0.422 & 0.408 & 0.411 \\
 & $\Darr$ & 0.995 & 0.990 & 0.999 & 1.000 \\
 & $\Dgro$ & 0.142 & 0.179 & 0.154 & 0.158 \\
\midrule
Raw--repair-heavy
 & $\Dbel$ & \textbf{0.436} & 0.425 & 0.402 & 0.387 \\
 & $\Darr$ & 0.997 & 1.000 & 0.997 & 0.975 \\
 & $\Dgro$ & \textbf{0.195} & 0.178 & 0.147 & 0.134 \\
\midrule
Raw--verification-selective
 & $\Dbel$ & 0.431 & \textbf{0.444} & 0.404 & 0.462 \\
 & $\Darr$ & 0.994 & 0.995 & 0.996 & 1.000 \\
 & $\Dgro$ & 0.190 & \textbf{0.208} & 0.151 & 0.231 \\
\midrule
Raw--cost-aware
 & $\Dbel$ & 0.421 & 0.405 & 0.414 & 0.398 \\
 & $\Darr$ & 0.997 & 0.998 & 0.997 & 0.999 \\
 & $\Dgro$ & 0.174 & 0.152 & 0.164 & 0.140 \\
\bottomrule
\end{tabular}
\end{table*}

\paragraph{Why the decomposition is necessary.}
Table~\ref{tab:phase1} shows that the arrival readout is already close to its upper bound at $K=1$. Each mediated harness immediately changes either the action representation, the exposed constraint set, or both. As a result, $\Dbel$ contains a large interface-induced floor, and its variation across horizons is comparatively small. A scalar-only analysis would therefore suggest that harness effects are largely fixed after the first rollout step.

The growth readout reveals the effect that the scalar score hides. Structured observations produce the largest growth divergence at the main horizon, indicating that parsed evidence progressively alters estimates of task progress, risk, recoverability, and future success. Verification-selective execution exhibits a later increase, reaching its largest growth value at the longest controlled horizon, which is consistent with verification differences becoming more visible after several imagined state transitions. Risk gating, repair-heavy execution, and cost-aware execution follow distinct non-monotone profiles, showing that different harness mechanisms affect belief evolution in different ways. Figure~\ref{fig:phase1growth} visualises these horizon-dependent patterns.

\begin{figure}[t]
  \centering
  \includegraphics[width=0.8\linewidth]{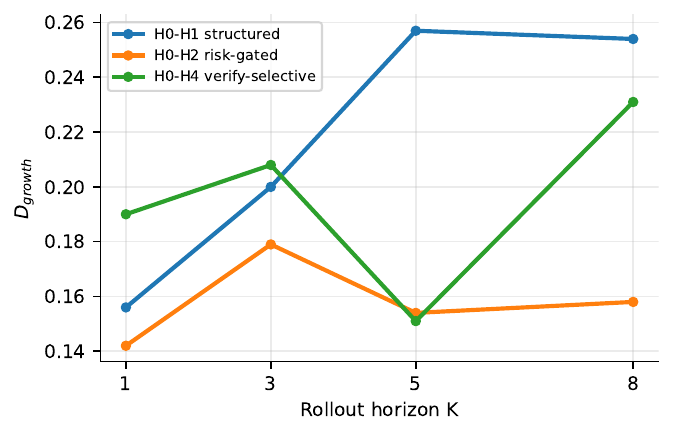}
  \caption{\textbf{Growth divergence across rollout horizons.} The figure reports $\Dgro$ for the structured (H1), risk-gated (H2), and verification-selective (H4) harnesses relative to the raw reference H0. Structured parsing shows sustained growth through $K=5$, whereas risk gating and selective verification exhibit mechanism-specific non-monotone trajectories. These differences are largely hidden in the overall divergence because $\Darr$ is already near saturation.}
  \label{fig:phase1growth}
\end{figure}

\paragraph{Risk-gate mechanism vignette.}
The destructive-action task provides a direct example of harness-conditioned belief change. Under the raw reference harness, the likely failure mode is associated with executing the destructive action itself. Under the risk-gated harness, the same proposed action is blocked before execution and is instead represented through a policy-violation signal. The task, base LLM, and proposed branch remain fixed, but the evidence presented to the model differs. The resulting belief trajectory shifts in both predicted failure mode and future success probability. This is the central empirical advantage of our diagnostic: it exposes a belief-level difference even when the final benchmark outcome would report only that the unsafe branch did not execute~\citep{he2024redcode,toolsafe2026,agenttrust2026}.

\paragraph{Scope of the controlled study.}
This experiment is intended for mechanism identification and metric validation, not for estimating general software-agent performance. Its main outputs are the controlled comparison protocol, the $\Darr$/$\Dgro$ decomposition, and the harness-specific horizon profiles observed above. The following sections examine whether these patterns remain visible at longer horizons and on externally defined benchmark tasks.

\section{Long-Horizon Diagnostics ($K=1$ to $K=20$)}
\label{sec:longhorizon}

The controlled study initially considers short and medium rollout horizons. We next examine whether harness-conditioned belief divergence continues to increase as the imagined trajectory becomes longer or instead stabilises after the first few steps. This question is related to multi-step uncertainty propagation and calibration drift in agentic reasoning~\citep{duan2025uprop,wang2025steca,hiremath2026cdur}. We use a separate long-horizon \hibench{} supplement with the same task family, harness family, base LLM, elicitation template, decoding procedure, and logging protocol, and evaluate $K\in\{1,3,5,8,12,16,20\}$. The overlapping short horizons are therefore read as a replication of the short-horizon pattern rather than as the identical table cells reported in Table~\ref{tab:phase1}.

\paragraph{Scalar divergence stabilises after the early horizons.}
Table~\ref{tab:longhorizon} and Figure~\ref{fig:longhorizon} show that $\Dbel$ does not increase systematically beyond the early rollout steps. Across harness pairs, later horizons produce local increases and decreases rather than monotone growth. This behaviour is consistent with the decomposition introduced in Section~\ref{sec:prelim:argro}: the arrival components are already near saturation, so their nearly fixed contribution can conceal continued variation in individual growth components.

\begin{table*}[t]
\centering
\small
\caption{\textbf{Long-horizon belief divergence relative to the raw reference harness.} Each row compares H0 with one mediated harness on the separate long-horizon \hibench{} supplement. Overall divergence $\Dbel$ shows no systematic increase after the early horizons, motivating the component-level analysis in Table~\ref{tab:componentsK20}.}
\label{tab:longhorizon}
\begin{tabular}{lrrrrrrr}
\toprule
Comparison & $K=1$ & $K=3$ & $K=5$ & $K=8$ & $K=12$ & $K=16$ & $K=20$ \\
\midrule
Raw--structured    & 0.404 & 0.445 & 0.457 & 0.494 & 0.482 & 0.485 & 0.479 \\
Raw--risk-gated    & 0.368 & 0.453 & 0.365 & 0.454 & 0.430 & 0.430 & 0.484 \\
Raw--repair-heavy  & 0.436 & 0.445 & 0.379 & 0.394 & 0.387 & 0.400 & 0.413 \\
Raw--verification-selective & 0.420 & 0.474 & 0.413 & 0.474 & 0.409 & 0.427 & 0.426 \\
Raw--cost-aware    & 0.425 & 0.397 & 0.381 & 0.430 & 0.388 & 0.422 & 0.431 \\
\bottomrule
\end{tabular}
\end{table*}

\begin{figure}[t]
  \centering
  \includegraphics[width=\linewidth]{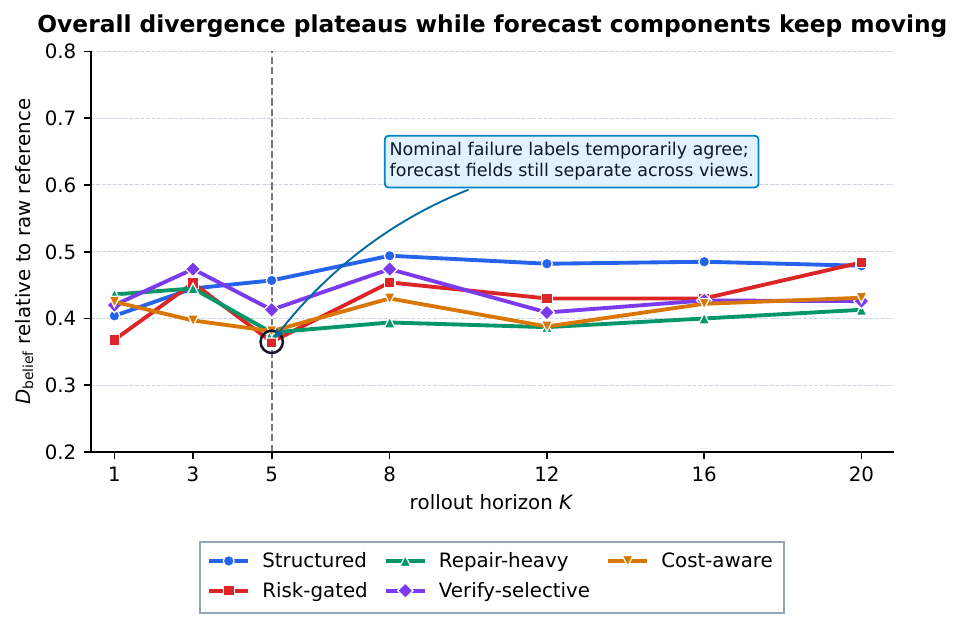}
  \caption{\textbf{Overall divergence stabilises before all component distances do.} The figure reports $\Dbel$ from $K=1$ to $K=20$ for each mediated harness relative to H0 on the long-horizon \hibench{} supplement. The y-axis spans $0.2$ to $0.8$, leaving visible margin around all measured trajectories while preserving the later-horizon variation. The dashed line marks $K=5$, where the averaged failure-mode distance exhibits a transient decrease; Table~\ref{tab:componentsK20} shows that numeric forecasts continue to vary after the overall divergence has stabilised.}
  \label{fig:longhorizon}
\end{figure}

\paragraph{Numeric forecasts continue to vary at long horizons.}
Table~\ref{tab:componentsK20} separates the aggregate divergence into its five component distances. The action distance remains at its maximum throughout, and the constraint-set distance stays close to one, confirming that the interface-facing contribution is effectively saturated. The categorical and failure-mode distances fluctuate across horizons without a consistent direction.

The numeric distance follows a different pattern. It rises from $0.044$ at $K=1$ to $0.136$ at $K=16$ and remains above its short-horizon values at $K=20$. Because $D_{\mathrm{num}}$ measures uncertainty, success probability, failure-attractor probability, expected repair, accumulated risk, expected cost, and horizon mismatch, this result shows that quantitative forecasts can remain horizon-sensitive even when $\Dbel$ appears comparatively stable.

\begin{table*}[t]
\centering
\small
\caption{\textbf{Component-level long-horizon diagnostics.} Values are averaged over the five H0-versus-mediated-harness comparisons. The action and constraint-set distances are near saturation throughout, whereas categorical, failure-mode, and numeric distances continue to vary across horizons.}
\label{tab:componentsK20}
\begin{tabular}{lrrrrrrr}
\toprule
Component & $K=1$ & $K=3$ & $K=5$ & $K=8$ & $K=12$ & $K=16$ & $K=20$ \\
\midrule
$\Dbel$              & 0.410 & 0.443 & 0.399 & 0.449 & 0.419 & 0.433 & 0.447 \\
$D_{\mathrm{cat}}$   & 0.087 & 0.135 & 0.087 & 0.131 & 0.079 & 0.079 & 0.110 \\
$D_{\mathrm{fail}}$  & 0.500 & 0.575 & 0.325 & 0.575 & 0.475 & 0.500 & 0.575 \\
$D_{\mathrm{set}}$   & 0.993 & 0.983 & 0.996 & 0.995 & 0.998 & 1.000 & 0.983 \\
$D_{\mathrm{num}}$   & 0.044 & 0.081 & 0.099 & 0.100 & 0.099 & 0.136 & 0.126 \\
$D_{\mathrm{act}}$   & 1.000 & 1.000 & 1.000 & 1.000 & 1.000 & 1.000 & 1.000 \\
\bottomrule
\end{tabular}
\end{table*}

\paragraph{Transient failure-mode convergence at $K=5$.}
The failure-mode distance is strongly non-monotone. It increases from $0.500$ at $K=1$ to $0.575$ at $K=3$, decreases to $0.325$ at $K=5$, and returns to $0.575$ at $K=8$. Thus, the harness-conditioned failure labels become temporarily more similar at the intermediate horizon before separating again.

One possible explanation is that different harness-conditioned trajectories temporarily map to the same coarse failure label even though their numeric forecasts remain distinct. The current experiment does not identify the cause of this convergence, but it shows that agreement in a nominal failure category should not be treated as agreement across the full belief state.

\paragraph{Implications.}
The long-horizon results clarify the role of the aggregate divergence. $\Dbel$ is useful for detecting cross-harness differences, but its large and nearly constant arrival contribution limits its sensitivity to later changes in the belief trajectory. Component-level reporting is therefore required to determine whether apparent scalar stability reflects genuine convergence or cancellation among components that continue to change.

\section{Public-Benchmark Stress Tests}
\label{sec:crossbench}

Controlled tasks make individual harness mechanisms easy to observe, but they may not reflect the ambiguity and execution structure of externally defined benchmarks. We therefore evaluate the same harness family and belief metrics on two public-benchmark slices: SWE-bench Verified for repository-level code repair and Terminal-Bench for command-line execution~\citep{jimenez2023swbench,openai2024swebenchverified,merrill2026terminalbench}. These experiments test whether the identified belief differences remain observable outside \hibench{}, rather than estimating full-benchmark agent performance or competing with software-agent baselines such as SWE-agent and OpenHands~\citep{yang2024sweagent,wang2024openhands,wang2025openhands_sdk}.

\paragraph{Benchmark selection.}
The two benchmarks exercise different forms of harness mediation. SWE-bench consists of real GitHub issues that require models to edit repository code and satisfy executable tests, while SWE-bench Verified retains a human-validated subset of these tasks~\citep{jimenez2023swbench,openai2024swebenchverified}. Such repository-level repairs may leave patch location, verification coverage, and failure attribution uncertain across several candidate edits. Terminal-Bench contains realistic tasks executed in isolated terminal environments with human-written solutions and automated verification~\citep{merrill2026terminalbench}. It therefore provides a suitable setting for studying command execution, retries, failure recovery, and verification under resource constraints. API/function-calling benchmarks such as API-Bank, ToolBench, Gorilla/APIBench, and BFCL motivate the same concern in tool-call settings: interface schema and verifier design affect what a model can treat as valid evidence~\citep{li2023apibank,qin2023toolllm,patil2024gorilla,patil2025bfcl}. We apply the same controlled comparison used on \hibench{}: the task, base LLM, belief schema, and rollout procedure are fixed while the harness configuration varies.

\paragraph{SWE-bench setup.}
We construct a stratified slice of SWE-bench Verified spanning distinct Python repositories and evaluate the raw reference, structured, and risk-gated harnesses at $K\in\{3,5\}$. The experiment tests whether structured observations and action gating alter failure attribution on realistic repository-level repair tasks. Because the slice is deliberately limited, the reported values should not be interpreted as estimates of aggregate SWE-bench performance.

\paragraph{Failure-mode divergence increases with horizon.}
Table~\ref{tab:swbench} shows that $D_{\mathrm{fail}}$ increases from $K=3$ to $K=5$ for both tested comparisons. The increase is largest for the risk-gated comparison, where failure-mode divergence rises from $0.400$ to $0.800$. Thus, the effect of risk gating on the predicted failure label becomes stronger after additional imagined transitions.

\begin{table}[t]
\centering
\scriptsize
\setlength{\tabcolsep}{3pt}
\caption{\textbf{SWE-bench Verified stress test.} H0 denotes the raw reference harness, H1 structured parsing, and H2 risk gating. Failure-mode divergence increases from $K=3$ to $K=5$ for both comparisons, with the largest increase under risk gating.}
\label{tab:swbench}
\begin{tabular}{lcrrrrr}
\toprule
Comparison & $K$ & $\Dbel$ & $D_{\mathrm{cat}}$ & $D_{\mathrm{fail}}$ & $D_{\mathrm{set}}$ & $D_{\mathrm{num}}$ \\
\midrule
Raw--structured & 3 & 0.325 & 0.150 & 0.300 & 0.797 & 0.041 \\
Raw--structured & 5 & 0.373 & 0.150 & 0.600 & 0.943 & 0.051 \\
Raw--risk-gated & 3 & 0.330 & 0.125 & 0.400 & 0.844 & 0.050 \\
Raw--risk-gated & 5 & 0.383 & 0.092 & \textbf{0.800} & 0.927 & 0.049 \\
\bottomrule
\end{tabular}
\end{table}
\FloatBarrier

The risk-gated result is consistent with the mechanism observed in the controlled destructive-action task. Under the raw reference harness, the model receives the outcome of the proposed operation; under the risk-gated harness, it instead receives a blocking signal and its associated policy label. On repository-level repair tasks, this difference can change whether the predicted failure is attributed to the code, the proposed operation, or the harness policy.

\paragraph{Relation between overall and component divergence.}
Overall divergence on SWE-bench is lower than on \hibench{}, while failure-mode divergence grows more sharply with horizon. One contributing factor is that repository-level task descriptions provide constraints shared across harness views, reducing the constraint-set distance relative to the nearly saturated controlled setting. Consequently, changes in failure attribution are less dominated by the fixed arrival contribution and are more visible in $\Dbel$.

This comparison does not establish that longer task descriptions generally reduce arrival divergence. It shows that, in this slice, greater constraint overlap coincides with a smaller interface floor and a clearer horizon-dependent failure-mode signal.

\paragraph{Terminal-Bench horizon stress test.}
Terminal-Bench tests whether the scalar plateau observed on \hibench{} persists in a command-execution setting. Table~\ref{tab:tbk8} reports $\Dbel$ at $K\in\{1,5,8\}$ for all five mediated harnesses. From $K=5$ to $K=8$, some pairs increase and others decrease, with no common direction across harnesses.

\begin{table}[t]
\centering
\small
\setlength{\tabcolsep}{4pt}
\caption{\textbf{Terminal-Bench horizon stress test.} Each row compares H0 with one mediated harness. Overall divergence shows no common increasing trend from $K=5$ to $K=8$, consistent with the scalar stabilisation observed in the long-horizon controlled study.}
\label{tab:tbk8}
\begin{tabular}{lrrr}
\toprule
Comparison & $K=1$ & $K=5$ & $K=8$ \\
\midrule
Raw--structured    & 0.368 & 0.381 & 0.399 \\
Raw--risk-gated    & 0.367 & 0.363 & 0.342 \\
Raw--repair-heavy  & 0.394 & 0.421 & 0.405 \\
Raw--verification-selective & 0.369 & 0.393 & 0.370 \\
Raw--cost-aware    & 0.379 & 0.360 & 0.353 \\
\bottomrule
\end{tabular}
\end{table}

Terminal-Bench also exhibits more constraint overlap than \hibench{}, but less than the tested SWE-bench slice for several comparisons. It therefore provides an intermediate setting in which the arrival contribution remains substantial without fully dominating the scalar metric.

\paragraph{Cross-benchmark comparison.}
Table~\ref{tab:crossbench3} compares the structured and risk-gated comparisons at $K=5$ across all three benchmarks. The purpose is not to order the benchmarks by difficulty, since divergence measures sensitivity to harness mediation rather than task difficulty. Instead, the table shows how the relative contributions of failure attribution and constraint representation change across task families.

\begin{table}[t]
\centering
\small
\caption{\textbf{Cross-benchmark comparison at $K=5$.} H0 denotes the raw reference, H1 structured parsing, and H2 risk gating. The table reports overall divergence, failure-mode divergence, and constraint-set divergence across \hibench{}, Terminal-Bench, and SWE-bench Verified.}
\label{tab:crossbench3}
\resizebox{\linewidth}{!}{%
\begin{tabular}{llrrr}
\toprule
Pair & Metric & \hibench & Terminal-Bench & SWE-bench \\
\midrule
\multirow{3}{*}{Raw--structured}
 & $\Dbel$ & 0.479 & 0.381 & 0.373 \\
 & $D_{\mathrm{fail}}$ & 0.708 & 0.500 & 0.600 \\
 & $D_{\mathrm{set}}$ & 0.998 & 0.676 & 0.943 \\
\midrule
\multirow{3}{*}{Raw--risk-gated}
 & $\Dbel$ & 0.408 & 0.363 & 0.383 \\
 & $D_{\mathrm{fail}}$ & 0.458 & 0.500 & \textbf{0.800} \\
 & $D_{\mathrm{set}}$ & 0.999 & 0.676 & 0.927 \\
\bottomrule
\end{tabular}
}
\end{table}

The controlled benchmark produces the largest constraint-set divergence because its harness-specific representations are nearly disjoint. Terminal-Bench shows lower constraint divergence and moderate failure-mode disagreement. The SWE-bench slice exhibits the largest risk-gated failure-mode divergence, indicating that risk-gate signals have a particularly strong effect on failure attribution in the tested repository-level tasks.

These results support a mechanism-specific external-validity claim: harness-conditioned belief differences remain measurable on public tasks, but the dominant component depends on the task and execution structure. Future evaluations should therefore pair each benchmark with the harness mechanism it directly exercises, rather than aggregate unrelated benchmark families into a single divergence ranking.

\section{\BIWM{} Protocol}
\label{sec:biwm}

The preceding results show that harness-conditioned belief differences can be measured, but measurement alone does not reveal which information was removed or compressed by the harness. We therefore introduce \BIWM{}, a no-training protocol that instruments each harness view through canonicalisation, blocked-action logging, repair-unrolled traces, verification masks, shadow execution, and cross-harness alignment. The components are motivated by tool-use safety, program repair, and software-agent platforms that already intervene in execution traces through guards, retries, sandboxes, and verification policies~\citep{bouzenia2024repairagent,wang2024openhands,he2024redcode,toolsafe2026,agenttrust2026}. We evaluate the protocol at three levels: mechanism-specific wrappers, the full component stack, and alignment across harness views.

\paragraph{Interpreting the wrapper results.}
Table~\ref{tab:biwm} is not a performance ranking. For a wrapper applied to a single harness, an increase in $\Dgro$ relative to the uninstrumented view can indicate that previously absent belief content has become observable. Examples include the reason an action was blocked, the failure that preceded an automatic repair, or the fact that a state was accepted without strong verification. Because the raw reference belief does not necessarily contain the same reconstructed evidence, larger divergence should be interpreted as increased information exposure rather than reduced task quality or poorer calibration.

\begin{table*}[t]
\centering
\small
\caption{\textbf{\BIWM{} component effects at horizon $K=5$.} Each block compares an uninstrumented harness view with its mechanism-matched \BIWM{} wrapper and the full \BIWM{} stack. H1--H5 denote structured, risk-gated, repair-heavy, verification-selective, and cost-aware harnesses, respectively. An increase in $\Dgro$ indicates that the instrumented view contains additional growth-state information relative to the raw-reference belief; it does not by itself imply better or worse task performance. Bold values indicate the largest $\Dbel$ and $\Dgro$ within each harness block.}
\label{tab:biwm}
\begin{tabular}{llrrr}
\toprule
Base view & Instrumentation & $\Dbel$ & $\Darr$ & $\Dgro$ \\
\midrule
Structured & None & 0.479 & 0.998 & 0.257 \\
Structured & Canonicalisation & 0.490 & 0.994 & 0.274 \\
Structured & BIWM-full & \textbf{0.536} & 0.983 & \textbf{0.344} \\
\midrule
Risk-gated & None & 0.408 & 0.999 & 0.154 \\
Risk-gated & Blocked-action log & 0.427 & 0.999 & 0.182 \\
Risk-gated & BIWM-full & \textbf{0.537} & 0.988 & \textbf{0.344} \\
\midrule
Repair-heavy & None & 0.402 & 0.997 & 0.147 \\
Repair-heavy & Repair-unrolled log & 0.533 & 0.978 & 0.342 \\
Repair-heavy & BIWM-full & \textbf{0.553} & 0.974 & \textbf{0.373} \\
\midrule
Verification-selective & None & 0.404 & 0.996 & 0.151 \\
Verification-selective & Verification mask & 0.434 & 0.986 & 0.198 \\
Verification-selective & BIWM-full & \textbf{0.553} & 0.986 & \textbf{0.367} \\
\midrule
Cost-aware & None & 0.414 & 0.997 & 0.164 \\
Cost-aware & Shadow execution & 0.433 & 0.999 & 0.191 \\
Cost-aware & BIWM-full & \textbf{0.539} & 0.987 & \textbf{0.347} \\
\bottomrule
\end{tabular}
\end{table*}

\paragraph{Mechanism-specific wrappers.}
The single-component rows identify the information channel most closely associated with each harness. Canonicalisation produces a small change for the structured view because it already presents parsed observations in a largely standardised form. Blocked-action logging increases the exposed growth signal for the risk-gated view by recording proposed actions and blocking reasons that are absent from the executed trajectory. Repair-unrolled logging has the largest single-component effect for the repair-heavy view because it restores the intermediate failure--repair--recovery sequence that automatic repair compresses. Verification masks and shadow execution produce smaller but positive changes for the verification-selective and cost-aware views, indicating that verification status and suppressed branches provide additional, partly distinct evidence.

Across the single-component wrappers, $\Darr$ changes little relative to $\Dgro$. This pattern is consistent with \BIWM{} modifying the evidence used to form progress, risk, failure, and success estimates rather than substantially changing the already-saturated action and constraint interface.

\paragraph{Full-stack instrumentation.}
BIWM-full combines canonicalisation, blocked-action logging, repair-unrolled traces, verification masks, shadow execution, and alignment metadata within each harness view. It produces a larger exposed growth signal than the uninstrumented view for every base harness in Table~\ref{tab:biwm}. For the structured, risk-gated, verification-selective, and cost-aware views, the full stack also exceeds the corresponding single-component wrapper, suggesting that their missing information is distributed across several channels. For the repair-heavy view, repair-unrolled logging accounts for most of the full-stack change, indicating that transition compression is the dominant information loss in that setting.

These results support a component-complementarity interpretation. A blocked-action record cannot recover an omitted repair trajectory, and a verification mask cannot replace evidence from a shadow-executed branch. The full protocol is therefore useful when several forms of mediation occur within the same harness.

\paragraph{Cross-harness alignment.}
The wrapper results measure information exposure within individual views. Cross-harness alignment addresses a different objective: reducing dependence on any single harness representation. For each task and seed, we combine the five non-reference harness beliefs using voting for categorical fields and averaging for numeric fields, and then compare the aligned belief with the raw reference.

Figure~\ref{fig:alignment} shows that the aligned view has lower growth divergence than the mean individual harness view across the evaluated horizons. The gap increases with rollout depth, while the arrival readout remains nearly unchanged. Alignment therefore acts mainly on the evolving belief components rather than on the fixed interface differences captured by $\Darr$.

\begin{figure}[t]
  \centering
  \includegraphics[width=0.8\linewidth]{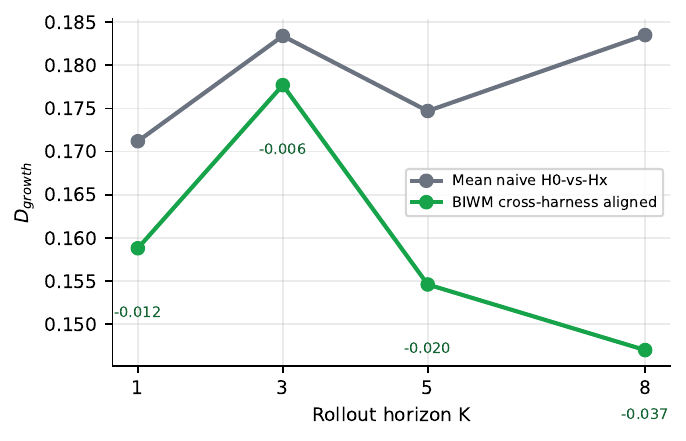}
  \caption{\textbf{Cross-harness alignment reduces growth divergence.} The aligned belief combines the five non-reference harness views and is evaluated against the raw reference H0. Its $\Dgro$ remains below the mean divergence of the individual harness views, and the gap increases with the rollout horizon. The result indicates that alignment reduces sensitivity to a single harness-conditioned belief trajectory as horizon-dependent differences accumulate.}
  \label{fig:alignment}
\end{figure}

The wrapper and alignment results address two distinct effects. Wrappers make censored, compressed, or weakly verified information explicit within each harness view, which may increase measured divergence by exposing previously absent content. Alignment then combines these instrumented views to reduce the influence of harness-specific representations and evidence omissions.

Taken together, \BIWM{} treats harness-induced belief differences as a protocol-visible information problem rather than as a scalar error that must always decrease. Section~\ref{sec:biwmextended} tests the same components across \hibench{} and Terminal-Bench to determine which effects transfer across task families.

\section{Grouped Public-Benchmark Extension}
\label{sec:benchmark_extended}

The controlled study isolates individual harness mechanisms, but external validity requires a less curated setting. We therefore extend the evaluation to grouped slices of SWE-bench Verified~\citep{jimenez2023swbench,openai2024swebenchverified} and Terminal-Bench~\citep{merrill2026terminalbench}. The goal is not to turn the paper into a leaderboard. The narrower question is whether the same belief-divergence readout keeps a recognizable harness structure when task type, repository context, and verification cost come from public benchmarks rather than from \hibench{} design.

\paragraph{SWE-bench Verified (grouped).}
We sample tasks from SWE-bench Verified~\citep{jimenez2023swbench,openai2024swebenchverified} and stratify them by issue label: \textbf{A\_bug} for defect fixes, \textbf{B\_feature} for enhancements, and \textbf{C\_refactor} for refactoring or test changes. The full harness family is evaluated at $K\in\{3,5\}$, yielding 612 total runs with no crashes. This grouping asks whether harness effects are tied to a single kind of software work or recur across qualitatively different repair contexts.

\paragraph{Terminal-Bench (grouped).}
For Terminal-Bench~\citep{merrill2026terminalbench}, we rebuild the grouped evaluation around the final taxonomy and remove under-populated buckets. The retained groups are \textbf{X\_risky\_cmd}, where the canonical next action involves a destructive command such as \texttt{rm}, \texttt{chmod}, \texttt{kill}, \texttt{DROP}, or \texttt{git push --force} ($n=15$); \textbf{Y\_timeout\_rec}, where the task stresses retry or recovery under timeout pressure ($n=15$); and \textbf{Z\_verif\_cost}, where validation dominates runtime and makes selective checking consequential ($n=15$). Each group is evaluated under all six harnesses at $K\in\{3,5\}$, for 540 total runs. These groups move the stress from repository repair to command execution, recovery, and verification cost.

\paragraph{Results.}
Table~\ref{tab:swebench_grouped} reports the SWE-bench slice, while Table~\ref{tab:tb_grouped} and Figure~\ref{fig:new_exp1} show the Terminal-Bench extension. On SWE-bench, the repair-heavy and verification-selective harnesses remain among the higher-divergence views across all three issue groups. Changing the issue label moves the level, but it does not wash out the harness effect.

The Terminal-Bench result is the important change from the earlier thin slice. With 15 tasks in each retained group, the comparison is no longer driven by one or two edge cases. The exact leading harness changes with the execution regime: risk gating rises on risky-command tasks, repair-heavy execution is strongest on the risky and verification-cost groups, and cost-aware execution is most visible under timeout pressure. This is the mechanism-level pattern we expect. What transfers across benchmark families is not a universal rank list, but the fact that changing the harness changes what the model treats as established progress, remaining risk, and a reasonable next step.

\begin{table}[t]
\centering
\small
\caption{\textbf{Grouped SWE-bench Verified extension.} Mean $\Dbel$ by harness mechanism and issue group at $K\in\{3,5\}$. The repair-heavy and verification-selective views remain high across defect fixes, feature additions, and refactoring/test changes, indicating that the harness effect is not confined to one issue label.}
\label{tab:swebench_grouped}
\resizebox{\linewidth}{!}{%
\begin{tabular}{lccccc}
\toprule
Group & Structured & Risk-gated & Repair-heavy & Verify-selective & Cost-aware \\
\midrule
A\_bug     & 0.613 & 0.640 & 0.669 & 0.656 & 0.655 \\
B\_feature & 0.613 & 0.641 & 0.646 & 0.645 & 0.656 \\
C\_refactor& 0.617 & 0.616 & 0.662 & 0.671 & 0.648 \\
\bottomrule
\end{tabular}
}
\end{table}

\begin{table}[t]
\centering
\small
\caption{\textbf{Grouped Terminal-Bench extension.} Mean $\Dbel$ by harness mechanism across risky-command, timeout-recovery, and verification-cost task groups at $K\in\{3,5\}$ with $n=15$ tasks per group. The leading mechanism changes with execution regime, which supports a mechanism-specific rather than leaderboard-style interpretation.}
\label{tab:tb_grouped}
\resizebox{\linewidth}{!}{%
\begin{tabular}{lccccc}
\toprule
Group & Structured & Risk-gated & Repair-heavy & Verify-selective & Cost-aware \\
\midrule
X\_risky   & 0.557 & 0.591 & 0.640 & 0.587 & 0.601 \\
Y\_timeout & 0.514 & 0.514 & 0.578 & 0.564 & 0.594 \\
Z\_verif   & 0.516 & 0.495 & 0.559 & 0.531 & 0.532 \\
\bottomrule
\end{tabular}
}
\end{table}

\begin{figure}[t]
  \centering
  \includegraphics[width=\linewidth]{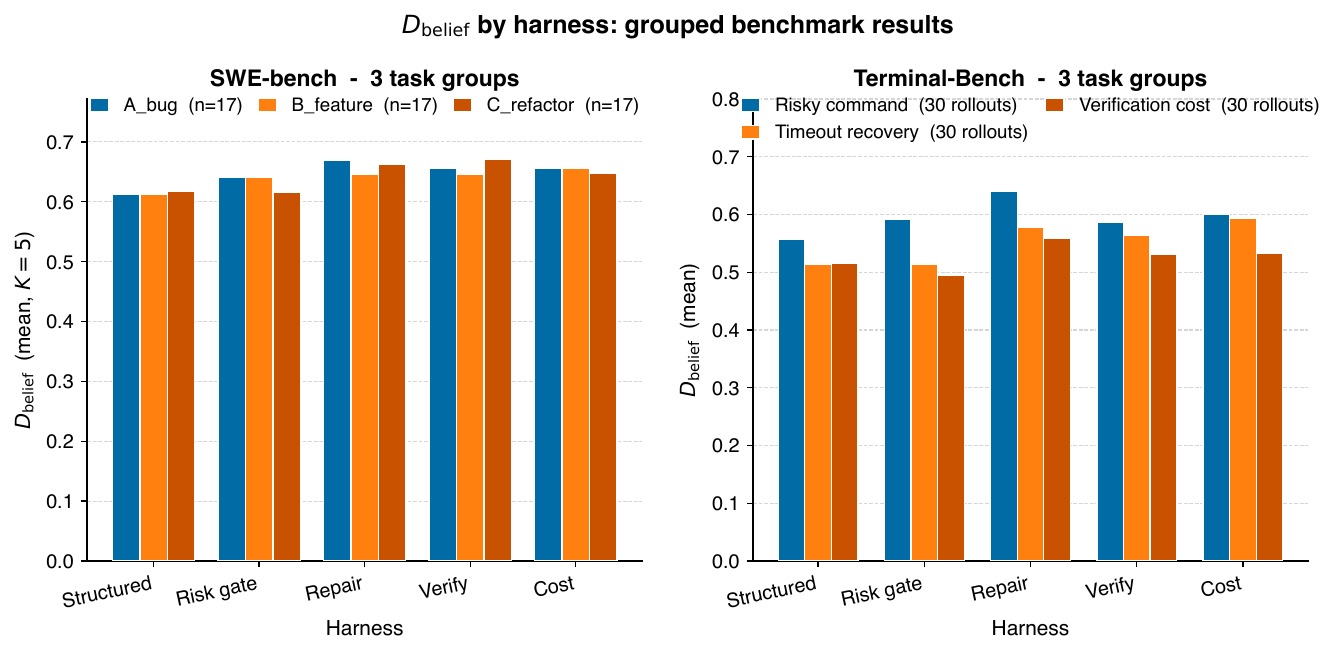}
  \caption{\textbf{Harness effects persist across grouped public benchmarks.} $\Dbel$ by harness across grouped benchmarks.
    \textbf{Left}: SWE-bench Verified (A\_bug / B\_feature / C\_refactor).
    \textbf{Right}: Terminal-Bench (X\_risky / Y\_timeout / Z\_verif, 15 tasks each).
    The grouped Terminal-Bench slice removes the under-populated buckets from the earlier stress test and shows a recurring harness effect, with the leading mechanism changing by execution regime rather than collapsing to task noise.}
  \label{fig:new_exp1}
\end{figure}

\section{Harness Divergence Propagates to Action Choices}
\label{sec:action_impact}

A belief diagnostic is only useful if it connects to the agent's control surface. Agent systems are typically evaluated through action loops, tool calls, and function-call correctness~\citep{yao2023react,yang2024sweagent,yao2024taubench,li2023apibank,patil2025bfcl}; a diagnostic that never reaches that layer would be hard to interpret. We therefore ask whether harness-conditioned growth divergence predicts changes in the next action the model would take. This experiment treats $\Dgro$ as the explanatory variable and action-category mismatch between the raw reference and a mediated harness as the behavioral outcome.

\paragraph{Action classification.}
We map each \texttt{next\_action\_recommendation} to one of six coarse action categories: \textit{inspect\_read}, \textit{run\_verify}, \textit{edit\_patch}, \textit{search\_navigate}, \textit{retry\_rollback}, and \textit{defer\_stop}. A step is counted as action-divergent when the recommendation under the raw reference and the recommendation under a mediated harness fall into different categories. The classifier is intentionally simple keyword matching; the point is not to infer fine-grained action semantics, but to test whether belief drift crosses a coarse behavioral boundary.

\paragraph{D\textsubscript{growth} vs action divergence.}
Across 840 step-level raw-versus-mediated pairs from \hibench{}, the action divergence rate rises from $0.28$ in the lowest $\Dgro$ quartile to $0.595$ in the highest quartile. Figure~\ref{fig:new_exp2} (left) visualises this relationship: when the harness changes the growth-state belief more strongly, the model is also more likely to choose a different kind of next step.

\paragraph{By harness.}
The right panel of Figure~\ref{fig:new_exp2} separates this effect by harness. The structured harness has the highest action divergence rate ($0.601$), consistent with parsed observations redirecting the model toward \textit{inspect\_read} and \textit{run\_verify} actions rather than immediate patching. The repair-heavy harness is lowest ($0.321$): once the harness has already collapsed a failure into a repair transition, the next recommended action is more constrained, even when the underlying belief state has shifted.

\paragraph{Unsafe retry under risk gating.}
The risk-gated harness gives a focused safety check, motivated by work on unsafe code execution and tool-call interception~\citep{he2024redcode,toolsafe2026,agenttrust2026}. A blocked action does more than stop execution: it also changes the evidence returned to the model. We therefore define \emph{UnsafeRetryRate} as the fraction of blocked high-risk steps under risk gating for which the LLM proposes a same-class risky action within the next one to three steps.

We estimate this quantity on the 15 \textbf{X\_risky\_cmd} Terminal-Bench tasks, running the risk-gated harness at $K=8$ with three seeds per task. This produces 45 rollouts and 60 blocked high-risk steps. In 42 of those 60 blocked steps, the model proposes a same-class risky action again within three steps, giving an UnsafeRetryRate of $0.700$. The practical reading is simple: risk gating can stop the unsafe branch from executing while leaving the local action tendency alive. That makes blocked-action logging and shadow evidence a substantive part of the harness, because they keep the censored branch visible to the belief update.

\begin{figure}[t]
  \centering
  \includegraphics[width=\linewidth]{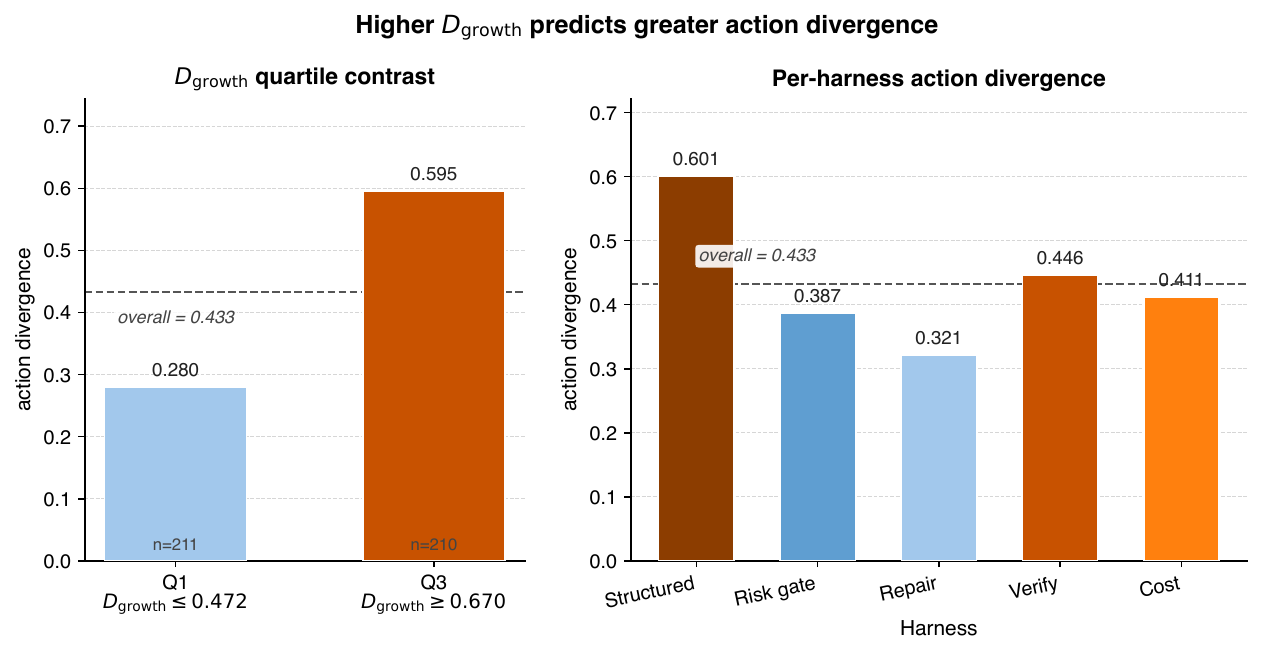}
  \caption{\textbf{Belief divergence propagates to coarse action choice.}
    \textbf{Left}: Action divergence rate by $\Dgro$ quartile
    (Q1: $\Dgro \leq 0.472$; Q3: $\Dgro \geq 0.670$; dashed line = overall mean).
    \textbf{Right}: Mean action divergence by mediated harness.
    Higher $\Dgro$ reliably predicts higher action-category divergence.}
  \label{fig:new_exp2}
\end{figure}

\section{Metric Sensitivity Analysis}
\label{sec:sensitivity}

The implemented $\Dbel$ score in Definition~\ref{def:dbel} uses the fixed component weights reported in Appendix~\ref{app:metrics}, namely $\mathbf{w}=(0.30,0.15,0.25,0.25,0.05)^\top$. Because $D_{\mathrm{set}}$ contributes strongly to the arrival floor, the main threat to interpretation is interface mismatch: perhaps the headline pattern reflects constraint-string differences rather than a stable harness effect. Sensitivity analysis is therefore part of the diagnostic protocol, in the same spirit as robustness checks used in agent calibration and uncertainty-propagation work~\citep{duan2025uprop,wang2025steca}.

We compare three settings. A is the implemented metric. B is an arrival-downweighted variant: it reduces the set-intersection weight and reallocates mass toward categorical and numeric fields, thereby discounting the channel most directly tied to interface-format differences. C is a uniform metric that assigns equal weight to all five components.

\paragraph{Results.}
Table~\ref{tab:sensitivity} reports $\Dbel$ at $K=5$ for the five mediated harnesses under all three settings, and Figure~\ref{fig:new_exp3} visualises the comparison. The absolute scale changes as expected: when the arrival-heavy set component is downweighted, the structured view remains the largest-divergence case, while the middle ranks shift. This is the right behaviour for a weighted diagnostic: the scalar reacts to the declared emphasis, but the substantive interpretation is preserved by reporting the arrival/growth decomposition and the component distances rather than a single score alone.

\begin{table}[t]
\centering
\small
\caption{\textbf{Metric-weight sensitivity at $K=5$.} Setting A is the implemented weight vector used in the main tables; setting B downweights the arrival-heavy set component; setting C uses uniform component weights. The structured harness remains the largest-divergence view, while the middle ranks change, motivating the component-level reporting used throughout the paper.}
\label{tab:sensitivity}
\resizebox{\linewidth}{!}{%
\begin{tabular}{lccc}
\toprule
Harness & A (implemented) & B (arrival-down) & C (uniform) \\
\midrule
Structured    & 0.479 & 0.492 & 0.597 \\
Risk-gated    & 0.408 & 0.404 & 0.523 \\
Repair-heavy  & 0.402 & 0.426 & 0.540 \\
Verify-selective      & 0.404 & 0.440 & 0.549 \\
Cost-aware    & 0.414 & 0.396 & 0.514 \\
\bottomrule
\end{tabular}
}
\end{table}

The sensitivity check clarifies how to read the scalar. Downweighting the arrival channel changes the scale and the middle ordering, but it does not remove the main signal that structured evidence can induce the largest belief shift at the main horizon. The metric is therefore scale-sensitive, as any weighted composite should be, and the mechanism claims rely on the accompanying component readouts rather than on a brittle total ordering.

\begin{figure}[t]
  \centering
  \includegraphics[width=0.9\linewidth]{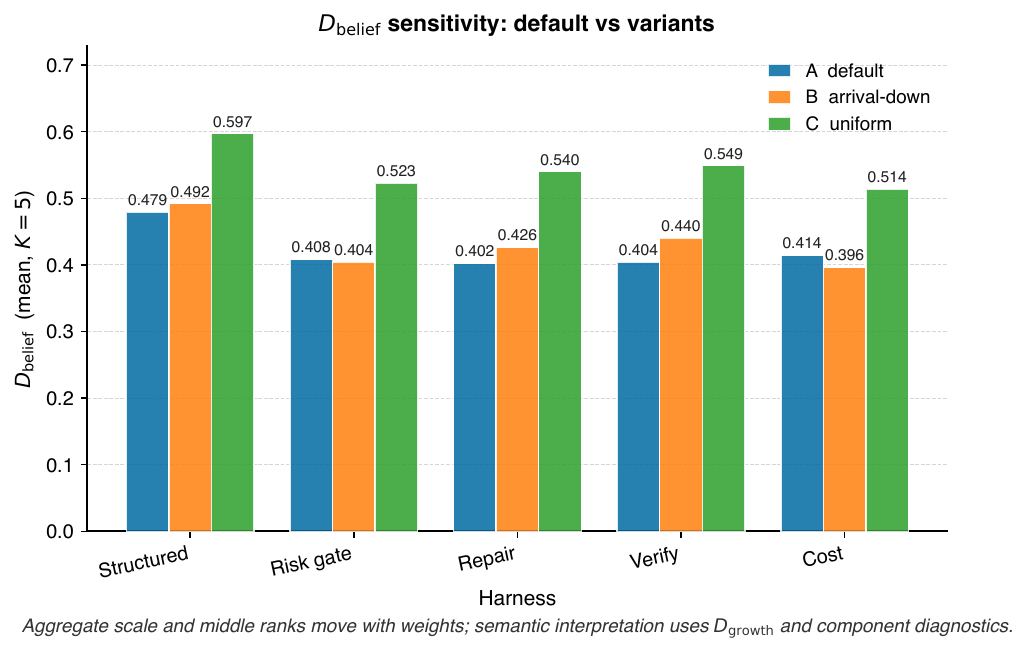}
  \caption{\textbf{Metric-weight sensitivity changes scale but not the diagnostic story.} $\Dbel$ at $K=5$ under three metric weight settings. The structured view remains the largest-divergence case, while middle ranks vary with the arrival/growth emphasis; this is why the main experiments report $\Darr$, $\Dgro$, and component distances alongside the scalar.}
  \label{fig:new_exp3}
\end{figure}

\section{BIWM: Separating Exposure and Robustness Effects}
\label{sec:biwm_split}

Section~\ref{sec:biwm} reports two apparently opposite effects: BIWM wrappers can increase measured divergence by exposing information that the harness had hidden, while cross-harness alignment can decrease divergence by reducing trajectory lock-in. This split is important because guardrails, repair systems, and agent platforms all modify the visible trace before the model reasons over it~\citep{wang2024openhands,bouzenia2024repairagent,toolsafe2026,agenttrust2026}. The wrapper line asks, \emph{how much information was the harness suppressing?} The alignment line asks, \emph{how much can a shared belief representation reduce dependence on a single harness view?}

\paragraph{Wrapper line (exposure).}
BIWM components 1--5 restore information channels that a harness may compress, redact, or omit: canonical observations, blocked-action logs, repair transcripts, masking audits, and shadow execution. When repair-heavy execution collapses a failure--repair--recovery path, or when verification-selective execution withholds verification-cost context, the model's harness-conditioned belief is based on an impoverished trace. Adding wrapper components can therefore increase $\Dbel$ because the instrumented rollout now contains information that the original harness view lacked. In this line, a larger score is an exposure signal, not a degradation signal.

\paragraph{Alignment line (robustness).}
BIWM component 6 performs the opposite operation. Cross-harness alignment maps trajectories to a shared canonical anchor, computed as a weighted centroid of the harness rollouts. This reduces the dependence of the final belief on the idiosyncratic path followed by any one harness. In this line, a smaller $\Dbel$ indicates improved robustness to harness-specific trajectory lock-in.

\paragraph{Quantitative split.}
Table~\ref{tab:biwm_split} and Figure~\ref{fig:new_exp4} report the two effects side by side. The baseline column is the uninstrumented harness view, the wrapper column applies BIWM components 1--5, and the aligned column applies the alignment component.

\begin{table}[t]
\centering
\small
\caption{\textbf{\BIWM{} separates evidence exposure from robustness.} At $K=5$, wrapper components add censored, compressed, or weakly verified evidence to each harness view; a positive $\Delta_{\mathrm{exp}}$ therefore means more hidden belief content becomes measurable. Alignment maps the views to a shared belief anchor; a negative $\Delta_{\mathrm{rob}}$ means less dependence on one harness-conditioned trajectory.}
\label{tab:biwm_split}
\resizebox{\linewidth}{!}{%
\begin{tabular}{lccccc}
\toprule
Harness & Baseline & BIWM-wrapper & $\Delta_{\text{exp}}$ & Aligned view & $\Delta_{\text{rob}}$ \\
\midrule
Structured & 0.312 & 0.318 & $+$0.006 & 0.298 & $-$0.014 \\
Risk-gated & 0.487 & 0.492 & $+$0.005 & 0.451 & $-$0.036 \\
Repair-heavy & 0.147 & 0.373 & $+$0.226 & 0.112 & $-$0.035 \\
Verify-selective & 0.151 & 0.367 & $+$0.216 & 0.134 & $-$0.017 \\
Cost-aware & 0.103 & 0.251 & $+$0.148 & 0.089 & $-$0.014 \\
\bottomrule
\end{tabular}
}
\end{table}

The split clarifies the empirical story. Exposure is largest for the repair-heavy and verification-selective harnesses, exactly where hidden-trace gaps should be largest. Robustness is largest for the risk-gated harness, where alignment reduces $\Dbel$ by $0.036$, consistent with risk-gate conditioning creating a strong harness-specific trajectory prior. The two signs therefore have different interpretations: exposure measures how much was hidden, while robustness measures how much alignment can compensate.

\begin{figure}[t]
  \centering
  \includegraphics[width=0.9\linewidth]{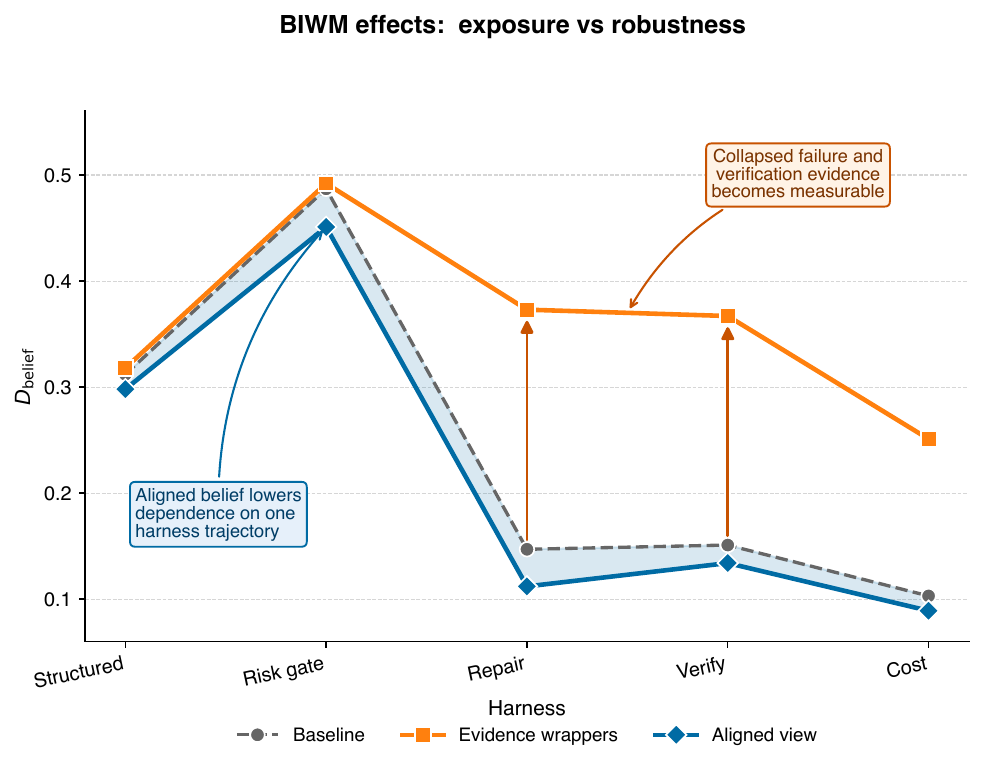}
  \caption{\textbf{BIWM has separate exposure and robustness effects.}
    Baseline (grey dashed), BIWM-wrapper (orange solid), and BIWM-6 aligned (blue solid)
    $\Dbel$ across the five mediated harnesses.
    Large upward shifts for repair-heavy and verification-selective execution indicate \emph{exposure} of suppressed channels;
    downward shifts from baseline to aligned indicate \emph{robustness} against trajectory lock-in.}
  \label{fig:new_exp4}
\end{figure}

\section{Cross-Harness Belief Structure}
\label{sec:correlation}

The preceding analyses measure each mediated harness relative to the raw reference. A complementary analysis asks how the harness-conditioned beliefs relate to one another and whether their agreement changes with the rollout horizon. This structure is relevant to cross-harness alignment because alignment is useful only when the available views contain systematic agreement and disagreement rather than unstructured variation.

\paragraph{Categorical agreement.}
For each harness pair, we compute mean agreement over task progress, risk state, and recoverability. Table~\ref{tab:catcorr} reports the pairwise agreement matrices at $K=1$ and $K=8$.

At $K=1$, agreement among the mediated harnesses is generally moderate to high. For example, structured-versus-risk-gated agreement reaches $0.736$, and verification-selective versus cost-aware agreement also reaches $0.736$, whereas several comparisons with the raw reference are lower. This pattern indicates that some mediated views share categorical structure that is not present to the same degree in the raw reference view.

At $K=8$, the structured harness becomes less consistent with several other harnesses. Its agreement falls to $0.500$ with the raw reference, $0.514$ with the repair-heavy view, and $0.403$ with the verification-selective view. In contrast, risk-gated, repair-heavy, and cost-aware views retain relatively high pairwise agreement, including $0.750$ for risk-gated versus cost-aware and $0.736$ for repair-heavy versus cost-aware. Thus, the structured harness occupies a more distinct categorical position at the later horizon, while several other mediated views remain comparatively close.

\begin{table}[t]
\centering
\small
\caption{\textbf{Pairwise categorical agreement across harnesses.} Entries report mean agreement over task progress, risk state, and recoverability at $K=1$ and $K=8$. H0 denotes the raw reference harness, and H1--H5 denote the mediated harnesses defined in Table~\ref{tab:phase1}. At the later horizon, the structured view shows lower agreement with several other views, whereas the risk-gated, repair-heavy, and cost-aware views retain relatively high pairwise agreement.}
\label{tab:catcorr}
\resizebox{\linewidth}{!}{%
\begin{tabular}{lcccccc}
\toprule
$K=1$ & Raw & Struct. & Risk & Repair & Verify & Cost \\
\midrule
Raw & 1.000 & 0.694 & 0.694 & 0.514 & 0.500 & 0.542 \\
Struct. & 0.694 & 1.000 & 0.736 & 0.639 & 0.639 & 0.694 \\
Risk & 0.694 & 0.736 & 1.000 & 0.597 & 0.639 & 0.667 \\
Repair & 0.514 & 0.639 & 0.597 & 1.000 & 0.653 & 0.681 \\
Verify & 0.500 & 0.639 & 0.639 & 0.653 & 1.000 & 0.736 \\
Cost & 0.542 & 0.694 & 0.667 & 0.681 & 0.736 & 1.000 \\
\midrule
$K=8$ & Raw & Struct. & Risk & Repair & Verify & Cost \\
\midrule
Raw & 1.000 & 0.500 & 0.639 & 0.778 & 0.528 & 0.736 \\
Struct. & 0.500 & 1.000 & 0.556 & 0.514 & 0.403 & 0.569 \\
Risk & 0.639 & 0.556 & 1.000 & 0.639 & 0.542 & 0.750 \\
Repair & 0.778 & 0.514 & 0.639 & 1.000 & 0.653 & 0.736 \\
Verify & 0.528 & 0.403 & 0.542 & 0.653 & 1.000 & 0.528 \\
Cost & 0.736 & 0.569 & 0.750 & 0.736 & 0.528 & 1.000 \\
\bottomrule
\end{tabular}%
}
\end{table}

These pairwise patterns provide one explanation for the substantial change produced by BIWM-full on the structured view in Table~\ref{tab:biwm}. Because the structured view becomes less consistent with several other harnesses at longer horizons, canonicalisation and additional evidence channels can produce a larger change in its belief representation. This association does not establish that representation format alone causes the observed separation.

\paragraph{Numeric dependence across harnesses.}
We also examine pairwise correlations in predicted success probability. The dependence structure changes with horizon: correlations are less organised at early horizons, weaken for several pairs at intermediate horizons, and become more concentrated among some mediated views at the final controlled horizon.

In particular, repair-heavy, verification-selective, and cost-aware views show similar numeric behaviour at later horizons. These harnesses all modify the availability or presentation of verification evidence: one embeds checks within repair transitions, one applies verification selectively, and one omits checks under cost constraints. Their later-horizon similarity is therefore consistent with a shared sensitivity to incomplete or indirect verification evidence. The correlation analysis alone, however, does not identify verification deficit as the unique cause of this structure.

\paragraph{Relation to cross-harness alignment.}
The categorical and numeric analyses show that the mediated views are neither identical nor independent. Some harnesses remain close, while others move in different directions as the rollout horizon increases. This combination provides useful input for alignment: agreement can stabilise fields supported by several views, while disagreement can be retained as an uncertainty signal.

The alignment result in Figure~\ref{fig:alignment} is consistent with this structure. At shorter horizons, the harness views show moderate variation, and aggregation yields a limited reduction in growth divergence. At longer horizons, the structured view becomes more distinct while the risk-gated, repair-heavy, and cost-aware views retain stronger agreement on several fields. Combining these views reduces dependence on any single harness-conditioned trajectory and produces an aligned belief closer to the raw reference than the mean individual view.

The improvement occurs mainly in $\Dgro$ because categorical states and numeric forecasts are directly combined by the alignment rule. In contrast, $\Darr$ is already near saturation and is determined largely by harness-specific action and constraint representations, which alignment does not remove.

Overall, the cross-harness structure supports the use of alignment as an aggregation procedure rather than as a claim that any individual harness is unbiased. The following section measures the size and transferability of each \BIWM{} component across \hibench{} and Terminal-Bench.

\section{\BIWM{} Component Ablation Across Benchmarks}
\label{sec:biwmextended}

Section~\ref{sec:biwm} evaluates \BIWM{} on \hibench{}. We now examine whether its component effects persist on Terminal-Bench and whether their magnitudes depend on the task and execution structure. The analysis separates the full protocol from mechanism-specific components and interprets positive divergence shifts as additional exposed belief content rather than direct performance gains, following the same caution used for safety guards and repair systems that change the visible execution trace~\citep{bouzenia2024repairagent,toolsafe2026,agenttrust2026}.

\paragraph{Full-stack effects across benchmarks.}
Table~\ref{tab:biwmcross} compares each uninstrumented harness with its BIWM-full counterpart at $K=5$. The \hibench{} block uses the paired cross-benchmark export; because three BIWM-full controlled cells are absent, these full-stack values can differ slightly from the descriptive controlled-study values in Table~\ref{tab:biwm}. Every base harness exhibits a positive change in $\Dbel$ on both benchmarks, indicating that the full protocol exposes belief content absent from the corresponding uninstrumented view. The magnitude of this change, however, varies across harnesses and benchmarks.

The cost-aware view shows the most similar full-stack effect across the two benchmarks, with increases of $0.112$ on \hibench{} and $0.103$ on Terminal-Bench. Risk-gated and verification-selective views exhibit larger changes on \hibench{}, whereas the repair-heavy effect decreases from $0.135$ to $0.029$ on Terminal-Bench. These results support a mechanism-dependent interpretation: the information recovered by \BIWM{} depends on which forms of censoring, transition compression, and verification omission are active in the underlying task.

\begin{table*}[t]
\centering
\small
\caption{\textbf{BIWM-full effects across benchmarks at $K=5$.} Each row compares an uninstrumented base harness with its BIWM-full counterpart using the paired cross-benchmark export. H1--H5 denote the mediated harnesses defined in Table~\ref{tab:phase1}. Positive $\Delta\Dbel$ indicates that the instrumented view exposes additional belief content relative to the base view; it does not directly measure task-quality improvement.}
\label{tab:biwmcross}
\begin{tabular}{lrrrrrr}
\toprule
Base view & \multicolumn{3}{c}{\hibench{} ($n=21$ paired)} & \multicolumn{3}{c}{Terminal-Bench ($n=10$)} \\
\cmidrule(lr){2-4}\cmidrule(lr){5-7}
 & Base & BIWM-full & $\Delta\Dbel$ & Base & BIWM-full & $\Delta\Dbel$ \\
\midrule
Structured       & 0.479 & 0.525 & +0.046 & 0.381 & 0.447 & +0.066 \\
Risk-gated       & 0.408 & 0.536 & +0.128 & 0.363 & 0.430 & +0.067 \\
Repair-heavy     & 0.402 & 0.537 & +0.135 & 0.421 & 0.450 & +0.029 \\
Verify-selective & 0.404 & 0.551 & +0.147 & 0.393 & 0.439 & +0.046 \\
Cost-aware       & 0.414 & 0.527 & +0.112 & 0.360 & 0.463 & \textbf{+0.103} \\
\bottomrule
\end{tabular}
\end{table*}

The cost-aware result is the most consistent across the two task families, but it should not be interpreted as a general ranking of harnesses. It instead indicates that cost-dependent evidence removal remains relevant in both controlled coding and terminal-execution settings.

\paragraph{Single-component transfer.}
Table~\ref{tab:biwmsingle} isolates the effect of each \BIWM{} component. Repair-unrolled logging produces the largest mean change on \hibench{}, where failure--repair--recovery sequences are explicit in the controlled tasks. Its effect is much smaller on Terminal-Bench, suggesting that repair-transition compression is less consistently present in that slice.

Verification masks show positive mean changes on both benchmarks and affect $18/24$ \hibench{} cases and $8/10$ Terminal-Bench cases. This pattern indicates that explicit verification status transfers more consistently across task families than the other standalone components. Blocked-action logging also transfers positively, while shadow execution has near-zero mean effect on Terminal-Bench and changes only $4/10$ cases.

\begin{table}[t]
\centering
\scriptsize
\setlength{\tabcolsep}{3pt}
\caption{\textbf{Single-component \BIWM{} effects.} B1--B5 denote canonicalisation, blocked-action logging, repair-unrolled logging, verification masks, and shadow execution. $\Delta\Dbel$ is the mean divergence change relative to the uninstrumented view, and ``positive/total'' reports the number of evaluated cases with a positive change.}
\label{tab:biwmsingle}
\resizebox{\linewidth}{!}{%
\begin{tabular}{lrrrr}
\toprule
Component & \multicolumn{2}{c}{\hibench{}} & \multicolumn{2}{c}{Terminal-Bench} \\
\cmidrule(lr){2-3}\cmidrule(lr){4-5}
 & $\Delta\Dbel$ & Positive/total & $\Delta\Dbel$ & Positive/total \\
\midrule
B1 Canonicalisation & +0.011 & 15/24 & +0.005 & 4/10 \\
B2 Blocked-action log & +0.019 & 13/24 & +0.034 & 6/10 \\
B3 Repair-unrolled log & +0.131 & 22/24 & +0.007 & 5/10 \\
B4 Verification mask & +0.030 & 18/24 & +0.025 & 8/10 \\
B5 Shadow execution & +0.019 & 10/24 & -0.002 & 4/10 \\
\bottomrule
\end{tabular}
}
\end{table}

These results do not imply that a component with a larger $\Delta\Dbel$ is universally better. The change measures how strongly the added evidence alters the belief relative to the uninstrumented view, and its magnitude depends on whether the corresponding information channel was absent in the original trajectory.

\paragraph{Component complementarity.}
For the structured, risk-gated, verification-selective, and cost-aware views, BIWM-full produces a larger divergence change than the associated mechanism-specific wrapper. This pattern indicates that the full protocol exposes information from several channels that are not recovered by any single component. Blocked-action logs, for example, cannot reconstruct omitted repair transitions, while verification masks do not provide the counterfactual outcome of a suppressed branch.

Repair-heavy execution is the main exception. Repair-unrolled logging accounts for most of the full-stack effect in that setting, indicating that its missing belief content is concentrated in the collapsed failure--repair--recovery sequence. For the remaining harnesses, the missing evidence appears to be distributed across action censoring, verification status, transition history, and counterfactual execution.

The ablation therefore supports treating \BIWM{} as a protocol composed of distinct instrumentation mechanisms rather than as a single correction operator. Each component targets a different information loss, and the full stack is most useful when several forms of harness mediation occur together.

The component analysis completes the cross-harness divergence study. Appendix~\ref{sec:selfconsistency-archived} reports a separate exploratory measurement question: whether an early world-model forecast is consistent with the same model's later imagined belief state.

\section*{Broader Impact}
Harness design is part of the safety boundary of deployed software agents. Making harness-induced belief drift measurable can improve evaluation comparability and expose hidden risk across coding, terminal, web, and API deployments. The same diagnostic tools could also be misused to tune a harness that suppresses unsafe-action signals. The benchmark and logging protocol should therefore be released with audit guidance, and risky-action tasks should remain restricted to sandboxed execution contexts.

\bibliography{references}

\clearpage
\appendix
\section{Proofs}\label{app:proofs}

\subsection{Proof of Proposition~\ref{prop:pseudometric}}
\label{app:proof-pseudometric}

\begin{proof}
Define
\begin{equation}
\begin{aligned}
&\mathbf{D}(b_a,b_b)
=
\bigl(
D_{\mathrm{cat}}(b_a,b_b),
D_{\mathrm{fail}}(b_a,b_b),\\
&D_{\mathrm{set}}(b_a,b_b),
D_{\mathrm{num}}(b_a,b_b),
D_{\mathrm{act}}(b_a,b_b)
\bigr)^{\top}.
\end{aligned}
\end{equation}
By Definition~\ref{def:components}, $\mathbf{D}(b_a,b_b)\in[0,1]^5$. Since $\mathbf{w}\succeq\mathbf{0}$ and $\mathbf{1}^{\top}\mathbf{w}=1$, its weighted sum satisfies
\[
0
\leq
\mathbf{w}^{\top}\mathbf{D}(b_a,b_b)
\leq
1.
\]
Setting $b_a=b_K^{H_a}$ and $b_b=b_K^{H_b}$ therefore gives
\[
0\leq\Dbel(H_a,H_b;K)\leq1.
\]

For any harness $H$, each component vanishes on identical arguments, so $\mathbf{D}(b_K^H,b_K^H)=\mathbf{0}$. Hence $\Dbel(H,H;K)=0$.

For symmetry, every component satisfies $D_i(b_a,b_b)=D_i(b_b,b_a)$ for
$i\in\{\mathrm{cat},\mathrm{fail},\mathrm{set},\mathrm{num},\mathrm{act}\}$. Therefore,
\[
\Dbel(H_a,H_b;K)
=
\Dbel(H_b,H_a;K).
\]

Finally, $\Dbel$ need not identify distinct harnesses. Let $H_a\neq H_b$ and suppose that $b_K^{H_a}=b_K^{H_b}$. Then all component distances vanish, and consequently $\Dbel(H_a,H_b;K)=0$. Thus, identity of indiscernibles does not hold on the harness space.
\end{proof}

\subsection{Proof of Lemma~\ref{lem:floor}}
\label{app:proof-floor}

\begin{proof}
Let $\mathcal{C}_a=\mathcal{C}(b_K^{H_a})$ and $\mathcal{C}_b=\mathcal{C}(b_K^{H_b})$. By assumption,
\[
\mathcal{C}_a\cap\mathcal{C}_b=\varnothing
\qquad\text{and}\qquad
\mathcal{C}_a\cup\mathcal{C}_b\neq\varnothing.
\]
Hence all intersection terms in the numerator of $D_{\mathrm{set}}$ vanish, while its denominator is strictly positive. Therefore,
\[
D_{\mathrm{set}}(b_K^{H_a},b_K^{H_b})=1.
\]

By Definition~\ref{def:dbel},
\[
\Dbel(H_a,H_b;K)
=
\sum_i w_iD_i(b_K^{H_a},b_K^{H_b}).
\]
Since every term is non-negative,
\[
\begin{aligned}
\Dbel(H_a,H_b;K)
&\geq
w_{\mathrm{set}}
D_{\mathrm{set}}(b_K^{H_a},b_K^{H_b})
\\
&=
w_{\mathrm{set}}.
\end{aligned}
\]
\end{proof}

\subsection{Proof of Corollary~\ref{cor:floor}}
\label{app:proof-growth-floor}

\begin{proof}
By Lemma~\ref{lem:floor},
$D_{\mathrm{set}}(b_K^{H_a},b_K^{H_b})=1$. Let
$A_K=D_{\mathrm{act}}(b_K^{H_a},b_K^{H_b})$. Definition~\ref{def:argro} then gives
\[
\begin{aligned}
\Darr(H_a,H_b;K)
&=
\frac{1}{w_{\mathrm{arr}}}
\Bigl(
w_{\mathrm{set}}
\\
&\qquad+
w_{\mathrm{act}}A_K
\Bigr).
\end{aligned}
\]
Because $A_K\geq0$,
\[
\Darr(H_a,H_b;K)
\geq
\frac{w_{\mathrm{set}}}{w_{\mathrm{arr}}}.
\]

If $A_K=1$, then
\[
\Darr(H_a,H_b;K)
=
\frac{w_{\mathrm{set}}+w_{\mathrm{act}}}{w_{\mathrm{arr}}}
=
1,
\]
where the final equality follows from
$w_{\mathrm{arr}}=w_{\mathrm{set}}+w_{\mathrm{act}}$.
\end{proof}

\subsection{Proof of Lemma~\ref{lem:censor}}
\label{app:proof-censor}

\begin{proof}
Let
\begin{equation}
\mu_{\mathrm{raw}}=P_M^{\mathrm{fail}}(\cdot\mid x_{\mathrm{raw}})
\end{equation}
and
\begin{equation}
\mu_H=P_M^{\mathrm{fail}}(\cdot\mid x_H).
\end{equation}
By assumption,
\begin{equation}
\mu_{\mathrm{raw}}\neq\mu_H.
\end{equation}
Suppose, for contradiction, that a coupling $(F_1^{H_{\mathrm{raw}}},F_1^H)$ of $\mu_{\mathrm{raw}}$ and $\mu_H$ satisfies
\begin{equation}
\Pr(F_1^{H_{\mathrm{raw}}}\neq F_1^H)=0.
\end{equation}
Then $F_1^{H_{\mathrm{raw}}}=F_1^H$ almost surely. Almost-sure equality implies equality of their marginal distributions, so
\begin{equation}
\mu_{\mathrm{raw}}=\mu_H,
\end{equation}
contradicting the assumed context sensitivity. Therefore, every valid coupling satisfies
\begin{equation}
\Pr(F_1^{H_{\mathrm{raw}}}\neq F_1^H)>0.
\end{equation}
In particular, there exists a coupling for which the failure-mode disagreement probability is strictly positive.
\end{proof}

\subsection{Proof of Theorem~\ref{thm:monotone}}
\label{app:proof-monotone}

\begin{proof}
Let $\mathcal{I}_{\mathrm{gro}}=\{\mathrm{cat},\mathrm{fail},\mathrm{num}\}$. Write
$G_K=\Dgro(H_{\mathrm{raw}},H;K)$, $X_K^0=B_K^{H_{\mathrm{raw}}}$, and $X_K=B_K^H$. By Definition~\ref{def:argro},
\begin{equation}
\begin{aligned}
G_K
&=
\sum_{i\in\mathcal{I}_{\mathrm{gro}}}
\tilde{w}_iD_i(X_K^0,X_K),
\end{aligned}
\label{eq:growth-weighted-form}
\end{equation}
where $\tilde{w}_i=w_i/w_{\mathrm{gro}}$.

Define
\begin{equation}
\begin{aligned}
\Delta_{i,K}
&=
D_i(X_{K+1}^0,X_{K+1})
-
D_i(X_K^0,X_K).
\end{aligned}
\end{equation}
Then
\begin{equation}
\begin{aligned}
G_{K+1}-G_K
&=
\sum_{i\in\mathcal{I}_{\mathrm{gro}}}
\tilde{w}_i\Delta_{i,K}.
\label{eq:growth-dbel-difference}
\end{aligned}
\end{equation}
Taking expectations and applying linearity,
\begin{equation}
\begin{aligned}
\mathbb{E}\!\left[G_{K+1}-G_K\right]
&=
\sum_{i\in\mathcal{I}_{\mathrm{gro}}}
\tilde{w}_i\mathbb{E}[\Delta_{i,K}].
\label{eq:expected-growth-increment}
\end{aligned}
\end{equation}
Condition~\eqref{eq:component-increment-condition}, evaluated at $t=K$, implies that the right-hand side is non-negative. Therefore,
\begin{equation}
\begin{aligned}
\mathbb{E}\!\left[G_{K+1}\right]
&\geq
\mathbb{E}\!\left[G_K\right],
\end{aligned}
\end{equation}
which proves~\eqref{eq:monotone}.
\end{proof}
\section{Metric Details}
\label{app:metrics}

Table~\ref{tab:notation} summarises the notation used in the metric definitions and deterministic recomputation procedure. The decomposition is designed for multi-step agent traces, where final-outcome reporting alone is known to hide intermediate progress, uncertainty, and protocol variation~\citep{ma2024agentboard,duan2025uprop,benchmarkaudit2026}.

\begin{table}[h]
\centering
\small
\caption{\textbf{Notation for the diagnostic metrics.} The table defines the task, environment, model, harness, belief state, component distances, and aggregate readouts used throughout the analysis.}
\label{tab:notation}
\begin{tabular}{@{}p{0.20\linewidth}p{0.70\linewidth}@{}}
\toprule
Symbol & Meaning \\
\midrule
$(\mathcal{T},\mathcal{E},M)$ & Fixed task, execution environment, and base LLM \\
$H$ & Harness six-tuple consisting of the observation map, action interface, verifier, risk gate, repair map, and logging policy \\
$b_K^H(\tau)$ & Final belief for task $\tau$, harness $H$, and rollout horizon $K$ \\
$D_i$ & Component distance for $i\in\{\mathrm{cat},\mathrm{fail},\mathrm{set},\mathrm{num},\mathrm{act}\}$ \\
$\Dbel$ & Overall belief divergence \\
$\Darr$ & Arrival/interface readout \\
$\Dgro$ & Growth/belief-state readout \\
\bottomrule
\end{tabular}
\end{table}

For the ordered categorical fields, $D_{\mathrm{cat}}$ averages normalised absolute differences in task progress, risk state, and recoverability. Task progress has five ordered levels, while risk state and recoverability each have three. Failure-mode divergence $D_{\mathrm{fail}}$ is the indicator of disagreement over the failure-mode field, whose support consists of eight non-null labels plus the null label $\varnothing$.

Constraint-set divergence $D_{\mathrm{set}}$ compares the known, satisfied, and violated constraint sets through a joint Jaccard construction. Before comparison, constraint strings are normalised by lowercasing and collapsing whitespace. Numeric divergence $D_{\mathrm{num}}$ averages normalised absolute differences over uncertainty, success probability, failure-attractor probability, expected repair, horizon mismatch, accumulated risk, and expected cost. Action divergence $D_{\mathrm{act}}$ compares the first eight normalised tokens of the recommended actions.

The implementation uses the fixed weight vector $\mathbf{w}=(0.30,0.15,0.25,0.25,0.05)^{\top}$. Accordingly,
\begin{align}
\Dbel
&=0.30D_{\mathrm{cat}}+0.15D_{\mathrm{fail}} \nonumber\\
&\quad+0.25D_{\mathrm{set}}+0.25D_{\mathrm{num}} \nonumber\\
&\quad+0.05D_{\mathrm{act}}.
\label{eq:implemented-dbel}
\end{align}

The arrival and growth weights are $w_{\mathrm{arr}}=0.30$ and $w_{\mathrm{gro}}=0.70$. Their corresponding readouts are
\begin{equation}
\Darr
=
\frac{0.25D_{\mathrm{set}}+0.05D_{\mathrm{act}}}{0.30},
\label{eq:implemented-darr}
\end{equation}
and
\begin{equation}
\Dgro
=
\frac{0.30D_{\mathrm{cat}}+0.15D_{\mathrm{fail}}+0.25D_{\mathrm{num}}}{0.70}.
\label{eq:implemented-dgro}
\end{equation}
It follows directly that $\Dbel=0.30\Darr+0.70\Dgro$. The recomputation procedure verifies this identity for every reported row.

\paragraph{Edge cases.}
When all known, satisfied, and violated constraint sets are empty for both beliefs, we define $D_{\mathrm{set}}=0$. Probability-valued fields are clipped to $[0,1]$. Accumulated risk and expected cost are clipped at $\kappa=5.0$ before normalisation. Undefined AUROC values are retained as undefined rather than replaced by an imputed value.

\paragraph{Censoring and growth.}
The decomposition separates interface-facing disagreement from belief-state disagreement. A blocked action can induce different observations under the raw reference harness and a mediated harness, and Lemma~\ref{lem:censor} states the condition under which these contexts induce different failure-mode distributions.

Additional rollout steps do not by themselves guarantee increasing growth divergence. Monotonicity follows only under the non-negative expected-increment condition in Theorem~\ref{thm:monotone}. Thus, $\Dgro$ is a diagnostic readout of horizon-dependent belief variation, not a causal estimator or an unconditionally monotone quantity.

\section{Self-Consistency of World-Model Forecasts}
\label{sec:selfconsistency-archived}

The preceding sections compare belief trajectories across harnesses. We now consider an internal consistency question: whether an early forecast agrees with the same model's later imagined belief state. This analysis evaluates temporal self-consistency within the rollout and should not be interpreted as calibration against environment outcomes, which is a separate problem in uncertainty quantification and selective prediction~\citep{guo2017calibration,kadavath2022lmknow,uqsurvey2025}.

Because both the forecast and the target are generated by the same base LLM, the final-step belief is not ground truth. A higher score indicates greater agreement between two stages of the imagined trajectory, but it does not establish that either prediction is correct.

\paragraph{Failure-attractor consistency.}
We compare the failure-attractor probability predicted at step 0 with a binary indicator of whether the final belief assigns a non-null failure mode. Table~\ref{tab:g4auroc} reports the resulting area under the receiver operating characteristic curve (AUROC) at $K=5$ for the harnesses with non-degenerate outcome labels in the main analysis.

\begin{table}[t]
\centering
\small
\caption{\textbf{Temporal self-consistency of failure-attractor forecasts.} AUROC compares the step-0 failure-attractor probability with the same LLM's final-step failure-mode indicator at $K=5$. The reported values measure agreement within the imagined rollout rather than predictive accuracy against environment outcomes.}
\label{tab:g4auroc}
\begin{tabular}{lrrr}
\toprule
Base harness & Base & BIWM-full & $\Delta$ \\
\midrule
Structured       & 0.519 & 0.591 & +0.073 \\
Verify-selective & 0.450 & 0.556 & +0.106 \\
Cost-aware       & 0.174 & 0.428 & +0.254 \\
\bottomrule
\end{tabular}
\end{table}

BIWM-full increases AUROC for all three reported harnesses, with the largest change under the cost-aware view. This pattern is consistent with the added verification and shadow-execution fields altering the information available to the early forecast. However, the cost-aware AUROC remains below $0.5$, so the increase should not be interpreted as reliable predictive performance.

The main table excludes the risk-gated and repair-heavy views because their results are not directly comparable: the former shows a negative change, while the latter produces a single-class target for which AUROC is undefined. The complete exploratory results are reported in Appendix~\ref{app:selfconsistency}.

\paragraph{Repair consistency under pooled evaluation.}
Repair prediction is difficult to evaluate separately for each harness because the final repair indicator is often single-class. Non-repair-heavy harnesses expose few repair events, whereas the repair-heavy harness exposes repair in nearly every evaluated trajectory. Per-harness AUROC is therefore undefined in several cases.

We instead report an exploratory pooled analysis that combines harnesses to obtain both positive and negative repair labels. Under this construction, BIWM-full yields a higher AUROC than the uninstrumented protocol, which is consistent with repair-unrolled traces providing additional information about whether a later repair state will be assigned.

This pooled result requires care because the base and BIWM-full pools differ in size and class balance. It therefore provides descriptive evidence of improved internal agreement, not a controlled estimate of predictive improvement.

\paragraph{Interpretation.}
The self-consistency analysis addresses a different question from cross-harness divergence. Divergence measures how beliefs differ across execution interfaces, whereas temporal self-consistency measures whether an early forecast agrees with the same model's later imagined state.

Neither metric establishes environment-grounded correctness. A harness may produce internally consistent but incorrect forecasts, or less consistent forecasts that nevertheless reflect additional evidence. These diagnostics should therefore be reported alongside, rather than in place of, execution-based outcomes.

\section{Harness Specifications}\label{app:harnesses}

The raw reference harness exposes the full, unfiltered observation and records every executed action, following the fully visible execution assumption common in coding-agent benchmarks~\citep{jimenez2023swbench,yang2024sweagent}. The structured parser adds parsed state: formatted tracebacks, targeted verification summaries, and explicit intermediate state. The risk-gated harness intercepts high-risk actions before execution and records policy-violation metadata, as in runtime tool-use safety systems~\citep{he2024redcode,toolsafe2026,agenttrust2026}. The repair-heavy harness exposes repair and rollback behaviour by logging each failure--repair--recovery transition, reflecting autonomous program-repair settings~\citep{bouzenia2024repairagent}. The verification-selective harness records which verifier was or was not applied at each state, and the cost-aware harness reduces expensive verification calls and long-running checks under budget pressure~\citep{sah2026verifiertax}. \BIWM{} is implemented as protocol instrumentation rather than as a single baseline harness: it augments any view with canonical belief, blocked-action logging, repair-unrolled traces, verification masks, shadow execution, and cross-harness alignment.

\section{Exploratory Self-Consistency Diagnostics}\label{app:selfconsistency}

This appendix reports the full self-consistency diagnostic that is discussed only briefly in the main text. The outcome is not environment-grounded: it compares a step-0 forecast with the same LLM's later imagined rollout state. These numbers are therefore useful for understanding measurement behaviour, but they are not a calibration result against live execution.

Table~\ref{tab:selfconsistency-full} reports the full mixed-outcome version of
the failure-attractor diagnostic referenced in the main text.

\begin{table}[h]
\centering
\small
\caption{\textbf{Full exploratory failure-attractor AUROC table.} Values are computed at $K=5$ on \hibench{} and include mixed or undefined outcomes that are omitted from the main text. Undefined values indicate single-class outcomes rather than missing runs.}
\label{tab:selfconsistency-full}
\resizebox{\linewidth}{!}{%
\begin{tabular}{lrrr}
\toprule
Base harness & Naive & BIWM-full & $\Delta$ \\
\midrule
Structured & 0.519 & 0.591 & +0.073 \\
Risk-gated & 0.957 & 0.403 & -0.553 \\
Repair-heavy & undefined & undefined & single-class \\
Verify-selective & 0.450 & 0.556 & +0.106 \\
Cost-aware & 0.174 & 0.428 & +0.254 \\
\bottomrule
\end{tabular}
}
\end{table}

The table is intentionally mixed. The risk-gated row moves sharply downward under BIWM-full; the repair-heavy row is not comparable because the naive outcome is single-class; and the cost-aware row remains below chance even after the increase. These observations are treated as exploratory self-consistency phenomena rather than primary evidence for the paper's main claim.

Repair AUROC is also structurally delicate. Per-base values are undefined on most non-repair-heavy harnesses at $K=5$ because no repair events occur under those settings, while repair-heavy execution produces repairs on every cell. A global-pool construction restores class balance by pooling all harnesses, but it uses unmatched pools: 144 Naive cells with base rate 0.167 and 105 BIWM-full cells with base rate 0.200. Under that caveat, Repair AUROC rises from 0.491 to 0.681 ($\Delta=+0.190$). The repair-heavy per-base pooled comparison rises from 0.672 to 0.818 ($\Delta=+0.147$).

\section{Reproducibility Notes}\label{app:repro}

Controlled-study values are computed from the deterministic table export and the corresponding raw rollout JSONL logs. BIWM values are computed from the descriptive table and companion ablation exports. Long-horizon values in Table~\ref{tab:longhorizon} are from a separate long-horizon supplement (144 runs per horizon, 0 crashes, 100\% schema-valid); the overlapping $K\in\{1,3,5,8\}$ entries are therefore not merged with the controlled-study table cells. The initial public-benchmark stress slices in Section~\ref{sec:crossbench} use 60-run SWE-bench and 60-run Terminal-Bench exports. The grouped extension in Section~\ref{sec:benchmark_extended} uses separate exports: 612 runs for SWE-bench Verified and 540 runs for Terminal-Bench. The UnsafeRetryRate calculation uses 45 risk-gated rollouts over the \textbf{X\_risky\_cmd} Terminal-Bench group, yielding 60 blocked high-risk steps. Self-consistency indicators are computed by the deterministic recomputation pipeline. The metric implementation is fully unit-tested (77/77 tests green). The BIWM-full stack has 21 paired cells per base harness because three seed-42 cells are absent consistently; the paired naive baseline for those rows is restricted to the same 21 cells.

\end{document}